\newtheorem{proposition}{Proposition}
\newtheorem{theorem}{Theorem}
\newtheorem{example}{Example}
\newtheorem{lemma}{Lemma}
\newtheorem{remark}{Remark}
\newtheorem{corollary}{Corollary}
\newtheorem{definition}{Definition}
\newcommand{\R}{\mathbb{R}}
\begin{document}
	
	\title{On decision regions of narrow deep neural networks}
	\author{
 	Hans-Peter Beise\\
  	Department of Computer Science\\
 	Trier University of Applied Sciences\\
 	\And
 	Steve Dias Da Cruz \\
  	IEE S.A.\\ 	 
  	University of Kaiserslautern
	\And
  	Udo Schr\"oder\\
  	IEE S.A.\\
	}

	
\maketitle      
\begin{abstract}
	We show that for neural network functions that have width less or equal to the input dimension all connected components of decision regions are unbounded. The result holds for continuous and strictly monotonic activation functions as well as for the ReLU activation function. This complements recent results on approximation capabilities by \cite{hanin2017approximating} and connectivity of decision regions by \cite{nguyen2018neural} for such narrow neural networks. Our results are illustrated by means of numerical experiments.
\end{abstract}

\begin{keywords} \\Expressive Power, Approximation by Network Functions, Neural Networks, Decision Regions, Width of Neural Networks
\end{keywords}

\section{Introduction}
In recent years machine learning experienced a remarkable evolution mainly due to the progress achieved with deep neural networks, c.f. \cite{krizhevsky2012imagenet}, \cite{hinton2012deep}, \cite{nguyen2018neural}, \cite{he2016deep},\cite{schmidhuber2015deep} and \cite{goodfellow2016deep} for an overview and theoretical background. In the course of this astonishing success in applications, there has been huge progress in the research towards understanding the mathematical properties of neural network functions. As  part of this, the approximation properties, or expressiveness, of neural network functions have attracted intense interest in recent research. The central result in this field is the classic \emph{universal approximation theorem}, which states that any continuous function can be approximated with arbitrary accuracy (in terms of uniform approximation or $L_p$ norms) by neural network functions that have only one hidden layer for nearly every activation function c.f. \cite{cybenko1989approximation}, \cite{hornik1991approximation}, see also \cite{scarselli1998universal} for an overview of classical results and \cite{gnecco2011some} for a comparison with other common approximation techniques. In last years, this kind of results where further improved in several respects \cite{costarelli2013approximation,petersen2018optimal,yarotsky2017error,yarotsky2018optimal, guliyev2018approximation,elfwing2018sigmoid,bolcskei2019optimal,petersen2020equivalence,guhring2020error} for some recent results in this direction.\\
On the other hand, from empirical observations it turned out that depth has a significant impact on the performance of neural networks which is why a lot of research has been dedicated to to analyse the effect of depth on the expressive power of neural networks, c.f. \cite{telgarsky2016benefits,mhaskar2016deep,montufar2014number,raghu2017expressive,rolnick2017power,lin2017does,cohen2016expressive}. \\
It is however clear that besides depth a neural network needs a certain width in order to be a universal approximator, c.f. Remark \ref{remarkLowerTrivial}. The role of width with regard to approximation properties is investigated in a line of works \cite{lu2017expressive, hanin2017approximating,hanin2017universal,johnson2018deep,kidger2020universal,park2020minimum}, and the importance of width from the perspective of decision regions is pointed out in \cite{nguyen2017loss}. In this work, we follow the latter work and investigate the expressiveness of networks of bounded width mainly in the latter terms.


Let us introduce the following notation. By $\lVert\cdot \rVert$ we denote the Euclidean norm in $\R^d$. For a set $D\subset \R^d$ we denote by $D^\circ$ the set of interior points, by $\overline{D}$ its closure, by $\partial D=\overline{D}\setminus D^\circ$ the boundary of $D$ and for $f:\R^d\rightarrow \R^m$ we set $\left\Vert f\right\Vert_D:=\sup\{\left\Vert f(x)\right\Vert :x\in D\}$. For $W\in \R^{m\times n}$ we denote by $\lVert W\rVert_{\textnormal{op}}:=\max_{\lVert x \rVert\leq 1 }\lVert Wx \rVert$ the operator norm. We consider neural network functions $F:\R^{d_{in}}\rightarrow \R^{d_{out}}$ where $d_{in}$ is called the input dimesion and $d_{out}$ the output dimension. Our network functions have the following form $F:=W_L\circ A_{L-1}\circ ... \circ A_1$ where $A_j(x)=\sigma(W_jx+b_j)$ with $W_j\in \R^{d_{j}\times d_{j-1}}$ (weights), $b_j\in \R^{d_j}$ (bias) and $\sigma:\R \rightarrow \R$ the activation function. We emphasize that $\sigma$ and the preimage $\sigma^{-1}$ are understood to be applied elementwise when applied to vectors or subsets of $\R^d$. The widely applied activation function, called \emph{rectified linear unit}, or shortly ReLU, is defined by $t\mapsto\max\{t,0\}$.
We set $d_0:=d_{in}$ and $d_L=d_{out}$ and call $d_j$  the \emph{width} of layer $j=1,...,L$. The width of the network is defined as $\omega:=\omega_F=\max\{d_j: \, j=1,...,L\}$ and $L$ is called the depth of the network. Adapting the notation from \cite{hanin2017approximating} to our needs, we define $\omega_{\min}(\sigma, d_{in},d_{out})$ to be the minimum width such that for every continuous $f:[a,b]^{d_{in}}\rightarrow \R^{d_{out}}$ and $\varepsilon>0$ there exists a network function $F:\R^{d_{in}}\rightarrow \R^{d_{out}}$ with activation function $\sigma$ and $\omega_F\leq\omega_{\min}(\sigma, d_{in},d_{out})$ such that $\left\Vert f-F\right\Vert_{[a,b]^{d_{in}}}<\varepsilon$, where $a<b$ are some real numbers.

\section{Related work}
It is a fundamental observation that the expressiveness of a function that implements a classifier model is closely related to the notion of decision regions. 
\begin{definition}\label{decRegion}
	For a network function $F=(F_1,...,F_{d_{out}}):\R^{d_{in}}\rightarrow \R^{d_{out}}$ and $j\in\{1,...,d_{out}\}$, the set $C_j:=\{x\in\R^{d_{in}}: F_j(x)>F_k(x),\textnormal{for all } k\neq j\}$ is called a decision region (for class $j$). If $K\subset\R^{d_{in}}$, then $C_j\cap K$ is called the decision region (of class $j$) in $K$. 
\end{definition} 
\begin{definition}\label{innerConnect}
	A set $C\subset \R^d$ is said to be connected if there exist no disjoint open sets $U,\, V  \subset \R^d$ such that $C\subset U\cup V$ and $C\cap U$ and $C\cap V$ are non-empty.
	Let $K\subset\R^d$ be some compact set, then $C$ is said to be connected in $K$, if such sets $U,\,V$ do not exist for $ C\cap K$.
\end{definition}

It is known that open sets can be decomposed into a disjoint family of connected open sets. In what follows, an element of this family is called
connected \emph{component}. These components are maximal in the sense that they are no proper subset of another connected subset of the original set.\\
It should be noted that the notion of connectivity is not equivalent to the more intuitive but more restrictive term of path-connectivity (c.f. Definition \ref{innerPathConnect}). 


\begin{definition}\label{innerPathConnect}
	A set $C\subset \R^d$ is said to be path-connected if for all  $x_1,\, x_2\in C$ there exists a continuous path $\gamma:[0,1]\rightarrow C$ such that $\gamma(0)=x_1$  and $\gamma(1)=x_2$.
	Let $K\subset\R^d$ be some compact set, then $C$ is said to be path-connected in $K$, if for all $x_1,\, x_2\in C\cap K$
	there exists a continuous path $\gamma:[0,1]\rightarrow C\cap K$ such that $\gamma(0)=x_1$ and $\gamma(1)=x_2$.	
\end{definition}

In \cite{fawzi2017classification} the decision regions of deep neural networks are investigated empirically. By experiments with ImageNet networks, the authors observe that two samples that are predicted to belong to the same class can be connected by a continuous path, where the path is found by a dedicated algorithm that is also provided in \cite{fawzi2017classification}. The latter article further gives some interesting insight regarding the local curvature of decision boundaries.

In contrast to the experimentally driven work, our work is more related to \cite{nguyen2018neural} which treats this problem from a theoretical perspective.  A central result therein is the following.
\begin{theorem}[Theorem 3.10,\cite{nguyen2018neural}]\label{theHeinConnected}
	Let $F$ be a neural network function such that $d_{in}=d_1\geq d_2\geq ...d_L=d_{out}$ and each weight matrix has full rank. If the activation function $\sigma$ is continuous, strictly monotonically increasing and satisfies $\sigma(\R)=\R$, then every decision region is connected.	
\end{theorem}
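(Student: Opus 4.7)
The plan is to realize the decision region as a preimage of an open convex set and to show that preimages of connected sets through each layer remain connected.

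First I would observe that $D_j := \{y\in\R^{d_{out}} : y_j > y_k \text{ for all } k\neq j\}$ is an intersection of open half-spaces, hence open and convex, and in particular connected. By definition $C_j = F^{-1}(D_j)$, so it suffices to track how the preimage behaves layer by layer through $F = W_L \circ A_{L-1} \circ \cdots \circ A_1$.

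The core preservation step I would prove is the following: if $U\subset\R^{d_i}$ is connected, then the preimage of $U$ under the affine map $x\mapsto W_i x + b_i$ is again connected, provided $W_i$ has full rank and $d_{i-1}\geq d_i$. The reason is that such a $W_i$ is surjective, so choosing any linear section $P:\R^{d_i}\to\R^{d_{i-1}}$, the map $x\mapsto (W_i x,\, x - P(W_i x))$ gives a homeomorphism between $W_i^{-1}(U-b_i)$ and $U\times \ker(W_i)\cong U\times\R^{d_{i-1}-d_i}$, and a product of connected spaces is connected. For the activation, since $\sigma$ is continuous, strictly increasing and satisfies $\sigma(\R)=\R$, it is a homeomorphism of $\R$, so componentwise $\sigma^{-1}$ is a homeomorphism of $\R^{d_i}$ that carries connected sets to connected sets. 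Combining these two facts, $A_i^{-1}(U)$ is connected whenever $U$ is, and the same holds for $W_L^{-1}$. A reverse induction on the layer index then yields that $C_j = A_1^{-1}(A_2^{-1}(\cdots A_{L-1}^{-1}(W_L^{-1}(D_j))\cdots))$ is connected.

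The main obstacle, or rather the point where the hypotheses are genuinely used, is that preimages of connected sets under arbitrary continuous maps need not be connected. The argument essentially needs the width condition $d_{i-1}\geq d_i$ together with full rank of $W_i$ to get surjectivity and the product structure of the fibers, and it needs strict monotonicity together with $\sigma(\R)=\R$ to make the componentwise activation a homeomorphism. Dropping any one of these hypotheses, the preimage of a connected open set through a single layer can split into multiple components and the induction breaks down; this is precisely why the theorem cannot be extended to, say, ReLU networks without further effort.
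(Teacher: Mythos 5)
This statement is quoted in the paper as Theorem~3.10 of \cite{nguyen2018neural} and is not proved here, so there is no in-paper argument to compare against; judged on its own, your proof is correct and complete. The decomposition is sound: $D_j$ is an intersection of open half-spaces, hence convex and connected; the map $x\mapsto\bigl(W_i x,\,x-P(W_i x)\bigr)$ with $P$ a linear section of the surjective full-rank $W_i$ (surjectivity uses $d_{i-1}\geq d_i$) is indeed a linear isomorphism carrying $W_i^{-1}(V)$ homeomorphically onto $V\times\ker(W_i)$, so affine preimages of connected sets stay connected; and the hypotheses on $\sigma$ make the componentwise activation a homeomorphism of $\R^{d_i}$, so $A_i^{-1}(U)=\{x: W_i x+b_i\in\sigma^{-1}(U)\}$ inherits connectedness. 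The reverse induction then gives connectedness of $C_j$. This is essentially the same preimage-preservation argument used in the cited reference, and your closing remark correctly identifies why it fails for ReLU, which is exactly the point the present paper's Examples illustrate. (Minor nit: the homeomorphism is onto $(U-b_i)\times\ker(W_i)$ rather than $U\times\ker(W_i)$, but these are homeomorphic by translation.)
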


In parallel to this, the approximation properties of width bounded neural networks have been studied in several works, \cite{lu2017expressive, hanin2017approximating,hanin2017universal,johnson2018deep,kidger2020universal,park2020minimum}. A common goal in these lines of research is to provide upper and lower bounds on the minimum width required to ensure universal approximation in $C(K,\R^d_{out})$, i.e the space of continuous function from a compact set $K\subset\R^{d_{in}}$ to $\R^{d_{out}}$, and in $L_p(\R^{d_{in}},\R^{d_{out}})$ or $L_p(K,\R^{d_{out}})$ ($K$ again a compact set in $\R^{d_{in}})$.  
 For the case of $C(K,\R^{d_{out}})$, it is proven in \cite{hanin2017approximating} that 

\begin{equation}\label{haninInequ}
d_{in}+1 \leq\omega_{\min}(\textnormal{ReLU},d_{in},d_{out}) \leq d_{in}+d_{out}.
\end{equation}

Further, in \cite{johnson2018deep} it is shown that 
\begin{equation}\label{ourEstimate}
\omega_{\min}(\sigma,d_{in},d_{out}) \geq d_{in}+1,
\end{equation}
for activation functions  $\sigma$ that admit arbitrary accurate uniform approximation by injective continuous functions on arbitrary compact subset of $\R$. 

Recently, results as stated in (\ref{haninInequ}) and (\ref{ourEstimate}), have been extended and partially improved in \cite{kidger2020universal,park2020minimum}. We refer to the latter work for a good overview. 


In this work, we take a similar focus as in \cite{nguyen2018neural}, namely limitations regarding the decision regions of neural networks of bounded width. Our main result complements the results on the lower bound found in \cite{hanin2017approximating}, c.f. (\ref{haninInequ}), and a result from \cite{nguyen2018neural}, c.f. (\ref{theHeinConnected}), in terms of decision regions. We show that for network functions with maximum width $d_{in}$ and both types of activation functions, strictly monotonic or ReLU, the decision regions do not need to be connected, as opposed to \cite{nguyen2018neural}, c.f. (\ref{theHeinConnected}), but are unbounded and therefore do not admit arbitrary accurate approximation of all continuous functions on compact sets. 

The impossibility of universal approximation in the latter case is also obtained in \cite{johnson2018deep}, c.f. $(\ref{ourEstimate})$. To this end, it is shown in \cite[Lemma 4]{johnson2018deep} that the contour lines of network functions $F:\R^{d_{in}}\rightarrow \R$ are unbounded. Our main result extends this by giving a related statement for decision regions of vector valued network functions.

\section{Results} 
In our main result we show that in case of $\omega_F\leq d_{in}$ and continuous and strictly monotonic activation functions or ReLU activation the components of the decision regions are unbounded. This implies that they intersect the boundary of the natural bounding box of input data. 
%

To this end, we exploit the basic observation of the following lemma and show that for certain narrow neural networks this can be continued to the input domain.
The content of the lemma is well-known. We give a short proof for interested readers.
\begin{lemma}\label{lemHypersurface}
	Let $A\in \R^{n\times d}$ with $n\leq d$ , $b\in\R^n$ and $x\in\R^d$ with $A x< b $. Then there exists a non-zero $v\in\R^d$ such that 
	$A(x+\lambda v)< b$ for all $\lambda\geq 0$
\end{lemma}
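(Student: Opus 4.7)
The plan is to reduce the conclusion to a purely linear-algebraic condition on $v$, and then exploit the rank constraint $n\leq d$ to produce such a $v$ by a case split on whether $A$ has a nontrivial kernel.

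First I would rewrite the desired inequality. Since $A(x+\lambda v)=Ax+\lambda Av$, the condition $A(x+\lambda v)<b$ for every $\lambda\geq 0$ is equivalent to $\lambda Av<b-Ax$ for every $\lambda\geq 0$. The hypothesis gives $b-Ax>0$ coordinatewise at $\lambda=0$, so it is sufficient (though not necessary) to find a nonzero $v\in\R^d$ with $Av\leq 0$ coordinatewise: in that case $\lambda Av\leq 0<b-Ax$ for all $\lambda\geq 0$.

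Next I would split on the rank of $A$. Since $n\leq d$, the rank of $A$ is at most $d$. If $\operatorname{rank}(A)<d$, then the kernel of $A$ is nontrivial and any $v\in\ker(A)\setminus\{0\}$ satisfies $Av=0\leq 0$. Otherwise $\operatorname{rank}(A)=d$, which forces $n=d$ and makes $A$ invertible; in this case I would set $v:=-A^{-1}\mathbf{1}$, where $\mathbf{1}=(1,\dots,1)^{\top}\in\R^n$, so that $Av=-\mathbf{1}<0$ and in particular $v\neq 0$.

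There is no real obstacle here — the only thing to pin down is why the hypothesis $n\leq d$ is essential: it rules out the case of an overdetermined injective $A$, where one could not necessarily find a direction of nonincrease. A minor point worth stating explicitly in the writeup is that the choice $v=-A^{-1}\mathbf{1}$ is one of many; any vector in the preimage of the nonpositive orthant would do, but using $\mathbf{1}$ keeps the construction concrete and avoids any additional appeal to convex geometry.
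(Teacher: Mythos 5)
Your proof is correct and follows essentially the same route as the paper: a case split on whether $A$ is invertible, taking a kernel vector when it is not, and otherwise choosing $v$ as a preimage of a strictly negative vector (the paper uses $v=x-A^{-1}b$, so $Av=Ax-b<0$, while you use $v=-A^{-1}\mathbf{1}$ — the same idea with a different negative target).
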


The geometrical interpretation of the previous lemma is that a convex set in $\R^d$, that is described by less than $d+1$ hypersurfaces, cannot enclose a point. It directly follows that for every compact set $K$ such that if
$C=\{x\in K: A x< b\}\neq \emptyset$ then $C\cap \partial K\neq \emptyset$. 

\begin{proof}
	In case where $A$ is invertible let $w$ be the unique solution of $Aw=b$. One directly verifies that $v=x-w$ has the desired property. Otherwise, we set $v$ equal to one of the (non-zero) vectors that are orthogonal to the rows of $A$.
\end{proof}

In what follows, it will be convenient to argue with invertible weight matrices. The following remark clarifies that this is justified in our setting.

\begin{lemma}\label{lemApproxOneLayer}
	Let $W\in\R^{d\times d }$, $K\subset \R^d$ be a compact set and $\phi,\tilde{\phi}:K\rightarrow \R^d$ be continuous mappings such that for some $\varepsilon_1>0$ 
	\[\lVert \tilde{\phi}-\phi\rVert_K<\varepsilon_1.\]
	 Then for every $\varepsilon_2>\lVert W\rVert_{\textnormal{op}} \,\varepsilon_1$, there exists an invertible $\tilde{W}\in\R^{d\times d}$ such that 
	\[\lVert \tilde{W} \tilde{\phi}-W\phi\rVert_K<\varepsilon_2.\]
\end{lemma}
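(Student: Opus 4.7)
The plan is to decompose the error by inserting and subtracting $W\tilde\phi$ and then exploit the density of invertible matrices in $\R^{d\times d}$ to replace $W$ by a nearby invertible $\tilde W$. Concretely, for any candidate $\tilde W$ the triangle inequality gives
\[
\lVert \tilde W\tilde\phi - W\phi\rVert_K
\;\le\; \lVert (\tilde W-W)\tilde\phi\rVert_K + \lVert W(\tilde\phi-\phi)\rVert_K
\;\le\; \lVert \tilde W-W\rVert_{\textnormal{op}}\,\lVert\tilde\phi\rVert_K + \lVert W\rVert_{\textnormal{op}}\,\varepsilon_1.
\]
Since $\varepsilon_2 > \lVert W\rVert_{\textnormal{op}}\,\varepsilon_1$ by hypothesis, the residual budget $\varepsilon_2 - \lVert W\rVert_{\textnormal{op}}\,\varepsilon_1$ is strictly positive, so it suffices to choose $\tilde W$ invertible with $\lVert \tilde W - W\rVert_{\textnormal{op}}$ small enough to make the first summand below this budget.

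Next I would observe that $\lVert\tilde\phi\rVert_K$ is finite by continuity of $\tilde\phi$ on the compact set $K$, which lets us translate the needed bound on the first summand into a concrete tolerance $\delta > 0$ on $\lVert\tilde W - W\rVert_{\textnormal{op}}$. To produce an invertible $\tilde W$ within distance $\delta$ of $W$ I would use the standard density argument: the map $\epsilon\mapsto\det(W+\epsilon I)$ is a nonzero polynomial in $\epsilon$, hence has only finitely many real roots, so there exists arbitrarily small $\epsilon\neq 0$ with $W+\epsilon I$ invertible; taking $\tilde W:=W+\epsilon I$ with $|\epsilon|\,\lVert I\rVert_{\textnormal{op}} < \delta$ yields the required invertible perturbation.

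The only edge case to handle is $\lVert\tilde\phi\rVert_K = 0$, i.e.\ $\tilde\phi\equiv 0$ on $K$. In that case the first summand vanishes for any $\tilde W$, and one may take $\tilde W$ to be any invertible matrix (for instance $W + \epsilon I$ as above), with the remaining summand bounded by $\lVert W\rVert_{\textnormal{op}}\,\varepsilon_1 < \varepsilon_2$. I do not expect a genuine obstacle here; the only delicate point is keeping the two small quantities (the tolerance $\delta$ and the operator norm perturbation $\epsilon$) calibrated against the strict inequality $\varepsilon_2 > \lVert W\rVert_{\textnormal{op}}\,\varepsilon_1$, which is precisely what guarantees the slack needed to absorb the density argument.
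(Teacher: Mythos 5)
Your proposal is correct and follows essentially the same route as the paper: the same insertion of $W\tilde{\phi}$, the same triangle-inequality split into $\lVert(\tilde{W}-W)\tilde{\phi}\rVert_K$ and $\lVert W(\tilde{\phi}-\phi)\rVert_K$, and the same appeal to density of invertible matrices to control the first term within the slack $\varepsilon_2-\lVert W\rVert_{\textnormal{op}}\,\varepsilon_1$. You merely spell out the density step (via the polynomial $\epsilon\mapsto\det(W+\epsilon I)$) and the $\tilde{\phi}\equiv 0$ edge case, which the paper leaves implicit.
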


\begin{proof}
For $\delta=\varepsilon_2-\lVert W\rVert_{\textnormal{op}}\, \varepsilon_1$ we can find an invertible $\tilde{W}\in\R^{d\times d}$ such that 
\[\Vert \tilde{W}\tilde{\phi}-W\tilde{\phi}\rVert_K<\delta.\]
With the triangle inequality and elementary properties of the operator norm it follows that for every $x\in K$ we have
\[\lVert\tilde{W}\tilde{\phi}(x)-W\phi(x)\rVert\leq \lVert\tilde{W}\tilde{\phi}(x)-W\tilde{\phi}(x)\rVert+
\lVert W\rVert_{\textnormal{op}}\, \lVert \tilde{\phi}(x)-\phi(x)\rVert<\varepsilon_2.\] This concludes the proof.

\end{proof}

\begin{lemma}\label{lemApproxNN}
	Let $F:\R^{d}\rightarrow \R^d$ be a neural network function given by $F:=A_{L}\circ ... \circ A_1$ where $A_j(x)=\sigma(W_jx+b_j)$ with $W_j\in \R^{d\times d}$ for $j=1,...,L$, and with a continuous activation function $\sigma$.
For a compact $K\subset \R^d$ and $\varepsilon>0$, there exist invertible $\tilde{W_j} \in \R^{d\times d},\, j=1,...,L$ such that the network function $\tilde{F}:\R^d\rightarrow	\R^d$ defined by $\tilde{F}:=\tilde{A}_{L}\circ ... \circ \tilde{A_1}$ with $\tilde{A_j}(x)=\sigma(\tilde{W_j}x+b_j)$, $j=1,...,L$, approximates $F$ in a way that
	\[\lVert \tilde{F}-F\rVert_K<\varepsilon.\]

\end{lemma}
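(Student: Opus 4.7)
The plan is to induct on the number of layers $L$, at each stage using Lemma \ref{lemApproxOneLayer} to replace one more weight matrix by an invertible one while absorbing the propagated error through the uniform continuity of $\sigma$ on a suitable compact set.

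For the base case $L=1$, I would apply Lemma \ref{lemApproxOneLayer} with $\phi = \tilde{\phi} = \mathrm{id}_K$, $W = W_1$, and $\varepsilon_1 = 0$: for any $\delta > 0$ this produces an invertible $\tilde{W}_1$ with $\lVert (\tilde{W}_1 x + b_1) - (W_1 x + b_1)\rVert_K < \delta$. The set $S_1 := \{W_1 x + b_1 : x \in K\}$ is compact, so $\sigma$ is uniformly continuous on its closed $1$-neighborhood; choosing $\delta$ small enough relative to the modulus of continuity of $\sigma$ yields $\lVert \sigma(\tilde{W}_1 x + b_1) - \sigma(W_1 x + b_1)\rVert_K < \varepsilon$.

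For the inductive step, write $F = A_L \circ G$ with $G := A_{L-1} \circ \cdots \circ A_1$. Given $\varepsilon > 0$, first pick $\delta > 0$ such that the elementwise action of $\sigma$ sends $\delta$-close vectors to $\varepsilon$-close vectors on a compact neighborhood of $\{W_L G(x) + b_L : x \in K\}$. Then choose $\varepsilon_1 > 0$ with $\lVert W_L\rVert_{\mathrm{op}}\,\varepsilon_1 < \delta / 2$; by the inductive hypothesis applied to $G$ (a depth-$(L-1)$ network from $\R^d$ to $\R^d$) with tolerance $\varepsilon_1$, obtain invertible $\tilde{W}_1, \ldots, \tilde{W}_{L-1}$ whose associated $\tilde{G}$ satisfies $\lVert \tilde{G} - G \rVert_K < \varepsilon_1$. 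Finally, apply Lemma \ref{lemApproxOneLayer} once more with $\phi = G$, $\tilde{\phi} = \tilde{G}$, $W = W_L$, and $\varepsilon_2 = \delta$, producing an invertible $\tilde{W}_L$ with $\lVert \tilde{W}_L \tilde{G} - W_L G \rVert_K < \delta$; uniform continuity of $\sigma$ then gives $\lVert \tilde{F} - F \rVert_K < \varepsilon$.

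The main delicate point is the bookkeeping of tolerances: one must fix the required accuracies in reverse order (from the outermost layer inward) so that the $L$-fold composition of approximations does not blow up. The fact that $\tilde{G}(K) \subset G(K) + \overline{B(0, \varepsilon_1)}$ keeps every argument of $\sigma$ inside a fixed compact set, which is precisely what enables the uniform continuity estimate at each stage. Apart from this careful error accounting, the proof uses only Lemma \ref{lemApproxOneLayer} together with standard compactness arguments, and I do not anticipate any further analytic difficulty.
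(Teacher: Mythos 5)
Your argument is correct and is essentially the paper's own proof: both iterate Lemma \ref{lemApproxOneLayer} one layer at a time and absorb the propagated error via uniform continuity of $\sigma$ on a compact set containing all intermediate outputs, the only difference being that you organize this as a structural induction on $L$ peeling off the outermost layer, whereas the paper builds forward from the input with tolerances $\delta_0<\delta_1<\dots<\delta_L=\varepsilon$ fixed in advance. The sole nitpick is the base case's ``$\varepsilon_1=0$'', which Lemma \ref{lemApproxOneLayer} as stated does not literally permit; take $\varepsilon_1>0$ sufficiently small instead.
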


\begin{proof}
We set $M:=\max\{1,\lVert W_1\rVert_{\textnormal{op}},\lVert W_2\rVert_{\textnormal{op}},...,\lVert W_L\rVert_{\textnormal{op}}\}$.
We want to use the uniform continuity of $\sigma$ on compact sets. To this end let $Q_1,Q_2,...,Q_{L}\subset\R^d$, $Q_0:=K$ be compact sets iteratively defined by 
\[Q_j:=\bigcup\limits_{x\in Q_{j-1} }\left(\{y:\lVert W_jx+b_j-y\rVert\leq 1\}\cup \{y:\lVert \sigma(W_jx+b_j)-y\rVert\leq 1\}\right), \ j=1,...,L-1\]
and $Q:=\bigcup_{j=0}^{L-1} Q_j$. 
This yields a sufficiently large compact set that will contain the output of every intermediate layer, with and without activation,  of the network function that will turn out from our below construction for every input $x\in K$.
As a component wise applied function from $\R^d$ to $\R^d$,  $\sigma$ is uniformly continuous on $Q$ and we hence find $1>\delta_{L-1},...,\delta_1,\delta_0>0$ corresponding to $\varepsilon=:\delta_L$ such that $\delta_j>M\delta_{j-1}$, $j=1,...,L$, and for all $x,y\in Q$ with $\lVert x-y\rVert<M \delta_j $ the following estimate holds
\begin{equation}\label{equConEq}
\lVert\sigma(x)-\sigma(y)\rVert<\delta_{j+1},\quad j=1,...,L-1.
\end{equation}
Now, one can iteratively apply Lemma \ref{lemApproxOneLayer} to yield the desired $\tilde{W_j}$.
In fact, assume that we have already determined invertible $\tilde{W}_1,...,\tilde{W}_{j-1}\in\R^{d\times d}$ such that the corresponding $\tilde{A}_{k}(x)=\sigma(\tilde{W_k}x+b_k)$, $k=1,...,j-1$ satisfy
\[\lVert  \tilde{A}_{j-1}\circ...\tilde{A}_2\circ \tilde{A}_1-A_{j-1}\circ ...
A_2\circ A_1\rVert_K<\delta_{j-1}\] 
and $\tilde{A}_{j-1}\circ...\tilde{A}_2\circ \tilde{A}_1(x)\in Q_{j-1}$. 
Then Lemma \ref{lemApproxOneLayer} delivers an invertible $\tilde{W_j}\in \R^{d\times d}$ such that
\[\lVert \tilde{W_j}-W_j \rVert_{Q_{j-1}}<M\delta_{j-1},
\]
where the matrices in the norms of the latter inequality are interpreted as the corresponding linear mappings. 
The application of (\ref{equConEq}) to $\tilde{A}_{j}:=\sigma(\tilde{W_j}x+b_j)$ yields
\[\lVert  \tilde{A}_{j}\circ...\tilde{A}_2\circ \tilde{A}_1-A_{j}\circ ...
A_2\circ A_1\rVert_K<\delta_{j}.\] 
From the construction of $Q_j$ and $\delta_j<1$ it further follows that $\tilde{A_j}(x)\in Q_j$ for all $x\in Q_{j-1}$. This concludes the general step and the assertion follows inductively.
\end{proof}

\begin{proposition}\label{propAppNN}
Let $F:\R^{d_{in}}\rightarrow \R^{d_{out}}$ be a neural network function $F(x)=W_L (A_{L-1}\circ ... \circ A_1(x))+b_L$ with $\omega_F\leq d_{in}$ and $L$ layers. Then for a given $\varepsilon>0$, there exist invertible $\tilde{W}_j\in \R^{d_{in} \times d_{in}}$, $j=1,...,L-1$ such that $\tilde{F}(x)=W_L(\tilde{A}_{L-1}\circ ... \circ \tilde{A}_1(x))+b_L$ with $\tilde{A_j}(x)=\sigma(\tilde{W_j}x+b_j)$, $j=1,...,L-1$ satisfies 
\[\lVert \tilde{F}-F\rVert_K<\varepsilon.\]
\end{proposition}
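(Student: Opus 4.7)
The plan is to reduce this to Lemma \ref{lemApproxNN} by padding the network up to square $d_{in}\times d_{in}$ weight matrices without changing the function computed. Since $\omega_F\leq d_{in}$ means $d_j\leq d_{in}$ at every layer, there is room to pad each intermediate representation, bias, and weight matrix into $\R^{d_{in}}$ and $\R^{d_{in}\times d_{in}}$ respectively. Once we have a network of the form required by Lemma \ref{lemApproxNN}, approximation by invertible weights is immediate.

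Concretely, for $j=0,\ldots,L-1$ let $P_j\in\R^{d_{in}\times d_j}$ be the matrix embedding $\R^{d_j}$ into the first $d_j$ coordinates of $\R^{d_{in}}$, and let $\pi_j:=P_j^{\top}$ be its projection. I would set
\[
\hat W_j := P_j\,W_j\,\pi_{j-1}\in\R^{d_{in}\times d_{in}},\qquad \hat b_j := P_j b_j\in\R^{d_{in}},\qquad j=1,\ldots,L-1,
\]
and $\hat W_L:=W_L\,\pi_{L-1}\in\R^{d_{out}\times d_{in}}$, with $\hat A_j(x):=\sigma(\hat W_j x+\hat b_j)$. A direct check shows that the first $d_j$ coordinates of $\hat A_j$ reproduce $A_j$, while the remaining coordinates equal the constant $\sigma(0)$; these constant coordinates are then annihilated at the next step because $\hat W_{j+1}$ has zero columns precisely on the padded indices (this is the one subtle point, where $\sigma(0)\neq 0$ would otherwise be a worry). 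Consequently the padded network $\hat F:=\hat W_L\circ\hat A_{L-1}\circ\cdots\circ\hat A_1+b_L$ coincides with $F$ pointwise on $\R^{d_{in}}$.

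Next, the intermediate map $\hat G:=\hat A_{L-1}\circ\cdots\circ\hat A_1:\R^{d_{in}}\to\R^{d_{in}}$ is exactly of the form covered by Lemma \ref{lemApproxNN}. Given $\varepsilon>0$, I would pick $\varepsilon':=\varepsilon/(1+\lVert\hat W_L\rVert_{\textnormal{op}})$ and apply Lemma \ref{lemApproxNN} on the compact set $K$ to obtain invertible $\tilde W_1,\ldots,\tilde W_{L-1}\in\R^{d_{in}\times d_{in}}$ such that $\tilde G:=\tilde A_{L-1}\circ\cdots\circ\tilde A_1$, with $\tilde A_j(x)=\sigma(\tilde W_j x+\hat b_j)$, satisfies $\lVert\tilde G-\hat G\rVert_K<\varepsilon'$. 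Setting $\tilde F(x):=\hat W_L\,\tilde G(x)+b_L$ (where $\hat W_L$ is just $W_L$ composed with the projection, so the statement's formulation is preserved) and bounding $\lVert\tilde F-F\rVert_K\leq\lVert\hat W_L\rVert_{\textnormal{op}}\,\varepsilon'<\varepsilon$ closes the argument.

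The only non-routine step is the padding construction and the verification that $\hat F\equiv F$; everything else is a direct application of Lemma \ref{lemApproxNN} together with one use of the operator-norm inequality. I expect no serious obstacle, but the proof must be written with enough care to make clear why the $\sigma(0)$ artifacts introduced by padding do not propagate to the output.
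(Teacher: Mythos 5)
Your proof is correct and takes essentially the same route as the paper's: pad the hidden layers to square $d_{in}\times d_{in}$ matrices with zero rows and columns, observe the last layer is unchanged up to a projection, and invoke Lemma \ref{lemApproxNN}. You are in fact more careful than the paper's two-line argument, which passes silently over both the $\sigma(0)$ artifacts (annihilated, as you note, by the zero columns of the following layer) and the operator-norm amplification by the final affine map that motivates your choice $\varepsilon'=\varepsilon/(1+\lVert \hat W_L\rVert_{\textnormal{op}})$.
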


\begin{proof}
By padding with zeros rows and zero components, we can consider $x\mapsto W_jx+b_j$, $j=1,...,L-1$ as mapping from $\R^{d_{in}}$ to $\R^{d_{in}}$ so that $W_j\in \R^{d_{in} \times d_{in}}$ and $b_j\in \R^{d_{in}}$, $j=1,...,L-1$. Hence, since the final layer is the same for both $F$ and $\tilde{F}$, the result follows immediately from Lemma \ref{lemApproxNN} applied to the first $L-1$ layers.
\end{proof}

\begin{remark}\label{appWithDecRegion}
Let $F$ be a neural network function as in Proposition \ref{propAppNN} and $C$ a fixed connected component of a decision region of $F$, say $C$ a component $C_1$, and $K$ a compact subset of $\R^{d_{in}}$ that has non-empty intersection with $C$. Let further  $\varepsilon>0$. Then by means of Proposition $\ref{propAppNN}$ we find a network function $\tilde{F}$ with invertible square weight matrices in the first $L-1$ layers such that
\[\lVert \tilde{F}-F\rVert_K<\varepsilon\]
and further the decision region of $\widetilde{F}$ corresponding to the first class has a connected component $\widetilde{C}$ such that $\widetilde{C}\cap K\subset C\cap K$.

In fact, first one selects $\varepsilon=\varepsilon(C)>0$ so small that a network function that approximates $F$ by $\varepsilon$ with respect to norm $\lVert \cdot \rVert_K$, automatically has a decision region corresponding to the first class that intersects $C\cap K$. Then one approximates $F$ with an accuracy of $\varepsilon/2$ by a network $H$ by means of Proposition \ref{propAppNN}. In the network function $H$ one decreases the first entry in the bias vector of the final layer $L$ by $\varepsilon/2$ to give the desired approximating network function $\tilde{F}$. Then by the triangle inequalitiy
\[\lVert \tilde{F}(x)-F(x)\rVert <\varepsilon.\]
Further, the first components $F_1,\,H_1,\,\tilde{F}_1$ of $F,\, H,\,\tilde{F}$, respectively, satisfy 
\[\tilde{F}_1(x) < \tilde{F}_1(x)+\varepsilon/2 -\left(H_1(x)-F_1(x)\right)  = F_1(x) \]
for every $x\in K$.
\end{remark}

Our main result now states as follows.

\begin{theorem}\label{theoDecRIntersect}
	Let $F:\R^{d_{in}}:\rightarrow \R^{d_{out}}$ be a neural network function with continuous and strictly monotonic activation function $\sigma$ or $\sigma=$ReLU and $\omega_F\leq d_{in}$. Then for every decision region $C_j$, $j=1,...,d_{out}$, the connected components of $C_j$ are unbounded.
\end{theorem}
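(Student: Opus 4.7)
I would argue by contradiction: assume some decision region $C_j$ has a bounded non-empty connected component $C$, enclose $C$ in a compact set $K\subseteq\R^{d_{in}}$ with $C\subseteq K^\circ$, and show that the natural approximating network built in the preceding section must have a component of its own decision region both contained in $K$ (hence bounded) and unbounded (by a direct geometric argument), which is absurd.

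The first step is a clean reduction. By Proposition \ref{propAppNN} and the bias-perturbation construction in Remark \ref{appWithDecRegion}, I obtain a network $\tilde F$ whose first $L-1$ weight matrices are invertible $d_{in}\times d_{in}$ and whose class-$j$ decision region contains a connected component $\tilde C$ with $\tilde C\cap K\subseteq C$. Because $C\subseteq K^\circ$, the set $\tilde C\cap K$ coincides with $\tilde C\cap K^\circ$ and is therefore both open and closed in $\tilde C$; connectedness and non-emptiness of this intersection then force $\tilde C\subseteq K$, so $\tilde C$ is bounded. The task thus reduces to proving that every non-empty component of the decision region of such a streamlined $\tilde F$ is unbounded.

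For the core geometric step under strictly monotonic continuous $\sigma$, write $\tilde F(x)=W_L\Phi(x)+b_L$ with $\Phi:=\tilde A_{L-1}\circ\cdots\circ\tilde A_1$. In image space the class-$j$ decision region is the open convex polyhedron
\[
P:=\{y\in\R^{d_{in}}:(W_Ly+b_L)_j>(W_Ly+b_L)_k\text{ for all }k\neq j\},
\]
cut out by at most $d_{out}-1\leq d_{in}-1$ hyperplane constraints in $\R^{d_{in}}$. Fixing $x_0\in\tilde C$, Lemma \ref{lemHypersurface} supplies a direction $v\neq 0$ with $y_0+\lambda v\in P$ for all $\lambda\geq 0$, where $y_0:=\Phi(x_0)$. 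Strict monotonicity of $\sigma$ together with invertibility of the $\tilde W_\ell$ makes each $\tilde A_\ell$ a homeomorphism onto an open subset of $\R^{d_{in}}$, so $\Phi$ is a homeomorphism from $\R^{d_{in}}$ onto its open image $Y:=\Phi(\R^{d_{in}})$. Setting $\lambda^*:=\sup\{\lambda\geq 0:y_0+\mu v\in Y\text{ for all }\mu\in[0,\lambda]\}$ and $\gamma(\lambda):=\Phi^{-1}(y_0+\lambda v)$ on $[0,\lambda^*)$, a compactness argument pinches $\|\gamma(\lambda)\|\to\infty$: if $\lambda^*=\infty$, a bounded subsequence of preimages would converge to some $x^*$ with $\Phi(x^*)$ forced to diverge; if $\lambda^*<\infty$, a bounded subsequence would converge to some $x^*$ with $\Phi(x^*)=y_0+\lambda^*v\notin Y$, contradicting that $Y$ is open. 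Either way $\tilde C$ contains an unbounded continuous curve, contradicting $\tilde C\subseteq K$.

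The main obstacle is the ReLU case, where $\sigma$ is only non-strictly monotonic and $\Phi$ is merely piecewise affine, hence not a homeomorphism. The plan is to bridge to the previous case through the leaky-ReLU activation $\sigma_\varepsilon(t):=\max\{t,\varepsilon t\}$ with $\varepsilon\in(0,1)$, which is continuous, strictly monotonic, and converges to ReLU uniformly on compacts as $\varepsilon\to 0$. For small $\varepsilon$ the network $F_\varepsilon$ obtained from $F$ by replacing the activation with $\sigma_\varepsilon$ is arbitrarily close to $F$ on $K$, and Proposition \ref{propAppNN} together with Remark \ref{appWithDecRegion} carry over verbatim (both use only continuity of the activation) to produce a network $\tilde F_\varepsilon$ with strictly monotonic activation $\sigma_\varepsilon$, invertible $d_{in}\times d_{in}$ hidden weight matrices, and a class-$j$ decision region component forced into $K$ by the openness/closedness argument from the reduction. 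The strictly monotonic argument of the previous paragraph, applied to $\tilde F_\varepsilon$, contradicts this boundedness and closes the proof.
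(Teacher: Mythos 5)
Your proof is correct, and it reaches the conclusion by a genuinely different route in both of its main cases, so a comparison is worthwhile. The reduction to invertible square weight matrices via Proposition \ref{propAppNN} and Remark \ref{appWithDecRegion} is shared with the paper (and your clopen argument forcing $\tilde{C}\subseteq K$ is a tidy way to package the boundedness transfer), but from there the arguments split. In the strictly monotonic case the paper maps the component into the unbounded polyhedron $\Omega$, selects a point of $\partial\widetilde{C}\cap\Omega$, and invokes the Brouwer Invariance of Domain Theorem to pull it back to a boundary point of the component, then derives a contradiction with maximality if that point is interior to $K$. You instead note that each layer is a homeomorphism onto an open set --- an elementary consequence of strict monotonicity of $\sigma$ on $\R$ that makes Invariance of Domain unnecessary --- and follow the preimage of the ray supplied by Lemma \ref{lemHypersurface} until it escapes to infinity, exhibiting the unbounded path inside the component explicitly. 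In the ReLU case the divergence is larger: the paper works directly with the piecewise affine structure, locating the first layer and coordinate where a preactivation becomes nonpositive and constructing a ray along that dead coordinate whose preimage exits the component; you bridge to the monotonic case via leaky ReLU $\sigma_\varepsilon$, which converges to ReLU uniformly on compacta. That bridge is sound, but the claim that the approximation machinery carries over ``verbatim'' deserves one explicit sentence: Remark \ref{appWithDecRegion} is stated for approximants with the \emph{same} activation, so you should first record that $\lVert F_\varepsilon-F\rVert_K$ is small and then apply Proposition \ref{propAppNN} and the bias-lowering trick to $F_\varepsilon$; since both use only continuity of the activation and uniform closeness on $K$, this is a routine check rather than a gap (and any imprecision in the remark about which component of $C_j$ captures $\tilde{C}\cap K$ is inherited equally by the paper's own proof). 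In exchange, your approach is more uniform --- one geometric engine handles both activation classes --- and topologically more elementary, while the paper's direct ReLU analysis is self-contained and yields finer geometric information about how the component reaches the boundary of the input box.
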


\begin{proof}
	For a general $d_{in}$-dimensional box $K=[a,b]^{d_{in}}$, $a<b$, we show that each connected component of the decision regions intersect $\partial K$. We consider $C_1$, the decision region for the first class (c.f. Definition \ref{decRegion}), for which we fix a connected component that intersects $K$. We denote the intersection of this component with $K$ by $C$. That is, $C$ is non-empty and open in $K$ and $C\cap K^\circ\neq\emptyset$. We assume that each weight matrix $W_j$, $j=1,...,L-1$ is in $\R^{d_{in}\times d_{in}}$ and invertible. By Proposition \ref{propAppNN} and Remark \ref{appWithDecRegion}, this covers the remaining cases.
	By the definition of $C_1$ and by continuity we have that $\widetilde{C}:=A_{L-1}\circ ...\circ A_1(C)$ is a connected subset of $\Omega=\{y\in \R^{d_{in}}:b_{1,L}-b_{k,L}>(w_{k,L}-w_{1,L})y, \ k=2,...,d_{out}\}$ where $w_{j,L}$ denotes the $j-$th row of $W_{L}$ and $b_{j,L}$ is the $j-$th component of $b_L$.

 Lemma \ref{lemHypersurface} yields that $\Omega$ is unbounded, since otherwise the hypersurfaces defined by $b_{1,L}-b_{k,L}=(w_{k,L}-w_{1,L})y$ for $k=2,...,d_{out}$ would enclose the points in $\Omega$.
 \\
First, consider the case when $\sigma$ is continuous and strictly monotonic. 
In this case, our assumptions give that $A_{L-1}\circ ...\circ A_1$ is injective. As it is the image of a non-empty bounded set under a continuous mapping defined on the whole domain $\R^{d_{in}}$, $\widetilde{C}$ is a non-empty bounded set. Since $\Omega$ is unbounded, as it is pointed out above, there exists an  $y_0\in \partial\widetilde{C}\cap \Omega$. The compactness of $K$ and the fact that $A_{L-1}\circ ...\circ A_1(C)\subset A_{L-1}\circ ...\circ A_1(K)$ ensures the existence of an $x_0\in K$ with $F(x_0)=y_0$. Further, the Invariance Domain Theorem (also known as Brouwer Invariance Domain Theorem), which applies to $A_{L-1}\circ ...\circ A_1$ in this case, implies that inner points of $C$ are mapped to inner points and hence $x_0$ must be an element of $\partial C$. 

Then, if $x_0\in \partial K$ the proof for this case is finished. Otherwise $x_0$ is an interior point in $K$. Since $y_0$ is an interior point of $\Omega$, the continuity of $A_{L-1}\circ ...\circ A_1$ implies the existence of a small $\varepsilon>0$ such that $B=\{x:\left\Vert x-x_0\right\Vert_2 <\varepsilon \}$ is a subset of $K$ and such that $A_{L-1}\circ ...\circ A_1(B)\subset \Omega$. Hence, $C\cup B\subset C_1$, since by definition $C_1=(A_{L-1}\circ ...\circ A_1)^{-1}(\Omega)$. But $B$ is not a subset of $C$ which contradicts the fact that $C$ is a connected component of $C_1$.

Now the case  $\sigma=$ReLu is considered. If for all $x\in C$, we have $W_j A_{j-1}\circ ...\circ A_1(x)+b_j>0$ for $j=2,...,L-1$, then $A_{L-1}\circ ...\circ A_1$ constitutes a linear affine and invertible map from $C$ to $\widetilde{C}$. Thus, following the same line of arguments as in the previous case, we obtain that $\partial K\cap C\neq\emptyset$. Otherwise, there exist an $x_0\in C$ and a smallest $ l\in\{1,..., L-1\}$ with corresponding $k\in\{1,...,d_{in}\}$ such that $w_{k,l}y_0+b_{k,l}\leq 0$ where $y_0:=A_{l-1}\circ ...\circ A_1(x_0)$ if $l>1$ and $y_0=x_0$ otherwise, and where $w_{k,l}$ denotes the $l-$th row of $W_{l}$ and $b_{k,l}$ is the $l-$th component of $b_l$. Without loss of generality say $k=1$.
	Then by the definition of ReLU, the classification does not change on $y_t=y_0-t e_1$, $t>0$ where $e_1=(1,0,...,0)^T$. More precisely, $A_{L-1}\circ ...\circ A_l(\sigma (y_0))=A_{L-1}\circ ...\circ A_l(\sigma (y_t))\in \Omega$ for all $t>0$. Since we have chosen $l$ to be minimal, the mapping $A_{l-1}\circ ...\circ A_1$, for $l>1$ or identitiy otherwise, is linear affine and invertible on $C$. Its preimage of the half line $\{y_0-t e_1:t\geq 0\}$ thus intersects $\partial C$ in some point $w$. Since by the preceding consideration $A_{L-1}\circ ...\circ A_1(w)\in \Omega$, we can conclude that $w\in \partial K$ by the same arguments as in the cases above.
	
\end{proof}

With Theorem \ref{theoDecRIntersect} we can now extend the lower bound of (\ref{haninInequ}) from \cite{hanin2017approximating} to a wider class of activation functions.
 
\begin{corollary}\label{lowerBoundWidth}
	For network functions with continuous and strictly monotonic activation function $\sigma$ or $\sigma=$ReLU the lower estimate $d_{in}<\omega_{\min}(\sigma,d_{in},d_{out})$ holds.
\end{corollary}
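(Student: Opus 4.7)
My plan is a direct contradiction argument powered by Theorem~\ref{theoDecRIntersect}. I may assume $d_{out}\geq 2$, since otherwise the notion of a decision region is vacuous. Suppose for contradiction that $\omega_{\min}(\sigma,d_{in},d_{out})\leq d_{in}$; then on every cube $[a,b]^{d_{in}}$, every continuous target admits arbitrarily accurate uniform approximation by network functions of width at most $d_{in}$.

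I would first engineer a continuous target $f=(f_1,\ldots,f_{d_{out}})$ on $[a,b]^{d_{in}}$ whose ideal class-$1$ decision region is bounded and sits compactly inside the cube. Concretely, fix a small closed ball $B\subset (a,b)^{d_{in}}$, set $f_k\equiv 0$ for all $k\geq 2$, and choose $f_1$ continuously with $f_1\equiv 1$ on $B$ and $f_1\equiv -1$ on $\partial[a,b]^{d_{in}}$ (interpolating, e.g., by a scaled distance function). Next I would invoke the hypothesis with tolerance $\varepsilon=1/4$ to obtain a network $F$ with $\omega_F\leq d_{in}$ and $\lVert F-f\rVert_{[a,b]^{d_{in}}}<1/4$. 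Elementary estimates then give $F_1>1/2>F_k$ on $B$, so $B$ lies in the class-$1$ decision region $C_1$ of $F$; and $F_1<-1/2<F_k$ on $\partial[a,b]^{d_{in}}$, so $C_1\cap\partial[a,b]^{d_{in}}=\emptyset$.

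Finally, let $\tilde C$ denote the connected component of $C_1\subset\R^{d_{in}}$ that meets $B$. Theorem~\ref{theoDecRIntersect} forces $\tilde C$ to be unbounded, so $\tilde C$ meets both the open set $(a,b)^{d_{in}}$ and the open set $\R^{d_{in}}\setminus[a,b]^{d_{in}}$. These two sets are disjoint and open, so by connectedness they cannot cover $\tilde C$; hence $\tilde C$ must contain a point of $\partial[a,b]^{d_{in}}$, contradicting the emptiness of $C_1\cap\partial[a,b]^{d_{in}}$ established above. The main obstacle is really only this last topological separation step, and it is standard; the rest is bookkeeping on top of Theorem~\ref{theoDecRIntersect}.
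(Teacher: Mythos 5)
Your argument is correct and takes essentially the same route as the paper: both construct a continuous target whose class-$1$ region is trapped strictly inside the cube, approximate it within a fixed uniform tolerance, and contradict the unboundedness of connected components guaranteed by Theorem~\ref{theoDecRIntersect}, with your final separation step merely spelling out what the paper states tersely as ``either empty or intersects the boundary.'' The only divergence is that the paper uses a scalar target and reads the sublevel set $\{F<0\}$ as a decision region (implicitly the class-$2$ region of the width-preserving augmented network $(F,0)$), which also covers the $d_{out}=1$ case you set aside as vacuous.
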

\begin{proof}
	 For the purpose to show the result by contradiction, let $f:K:=[0,1]^{d_{in}}\rightarrow \R$ be continuous with $f(x_h)=-1$ where $x_h=(1/2,...,1/2)^T$ and $f=1$ on the boundary of $K$ and assume that 
	\begin{equation} \label{approxContra}
	\left\lVert f-F\right\lVert_K < 1/2.
	\end{equation}
	Then necessarily $F(x_h)<0$ and $F(x_b)>0$ for all $x_b$ in $\partial K$.
	By Theorem \ref{theoDecRIntersect} the preimage of $(-\infty,0)$ under $F$ (the decision region $F<0$) is either empty or intersects the boundary of $K$ and hence gives a contradiction, since $F>1/2$ holds on $\partial K$.
\end{proof}

\begin{remark}\label{remarkLowerTrivial}
	It is easily seen that in general  $\omega_{\min}(\sigma,d_{in},d_{out})<d_{in}$ is impossible for all $\sigma$. Indeed, in this case $W_1$ would have non trivial kernel and therefore every function that is non-constant on all subspaces of $\R^{d_{in}}$, such as $x\mapsto \prod_{j=1}^{d_{in}} x_j^2$, cannot be approximated with arbitrary accuracy. 
\end{remark}

We now formulate an example that shows that, despite the restrictions given by Theorem \ref{theoDecRIntersect} and Theorem \ref{theHeinConnected}, decision regions can be disconnected as subset of a compact input domain (c.f. Definition \ref{innerConnect}).

\begin{example}\label{exampNotPathCon}
	Let $K=[-1,1]\times[-1,1]$ and $\sigma$ be the ReLU activation function. The weights and bias in the first layer are set as follows: $W_1$ is the rotation matrix with angle $\alpha=-\pi/4$, i.e.:
	\begin{equation}\label{rotMat}
	W_1=
	\left[ {\begin{array}{cc}
		\cos \alpha & -\sin \alpha \\
		\sin \alpha & \cos \alpha  \\
		\end{array} } \right],
	\end{equation}
	and $b_1=\sqrt{2} (1,-1/2)^T$. The parameters for the (output) second layer are as follows: $W_2= 1/\sqrt{2}\,(1,-4)^T$, $b_2=-1/4$. The complete model now is written as 
	\begin{equation}
		F:K\rightarrow \R,\, x\mapsto W_2\,\sigma(W_1 x+b_1)+b_2.
	\end{equation}
	We further set
	\begin{equation}
	A_1:K\rightarrow \R^2,\  x\mapsto \sigma(W_1 x+b_1).
	\end{equation}
	One easily verifies that $A_1(K)$ and the decision hypersurface defined by $W_2x+b_2=0$ are as depicted in Figure \ref{exampel1_mapping}. Then the region in $K$ that is mapped to $(-\infty,0)$ is not connected in $K$ as depicted in Figure \ref{exampel1_preIm}. 
	
\end{example}

\begin{figure}
	\centering
	\includegraphics[scale=0.4]{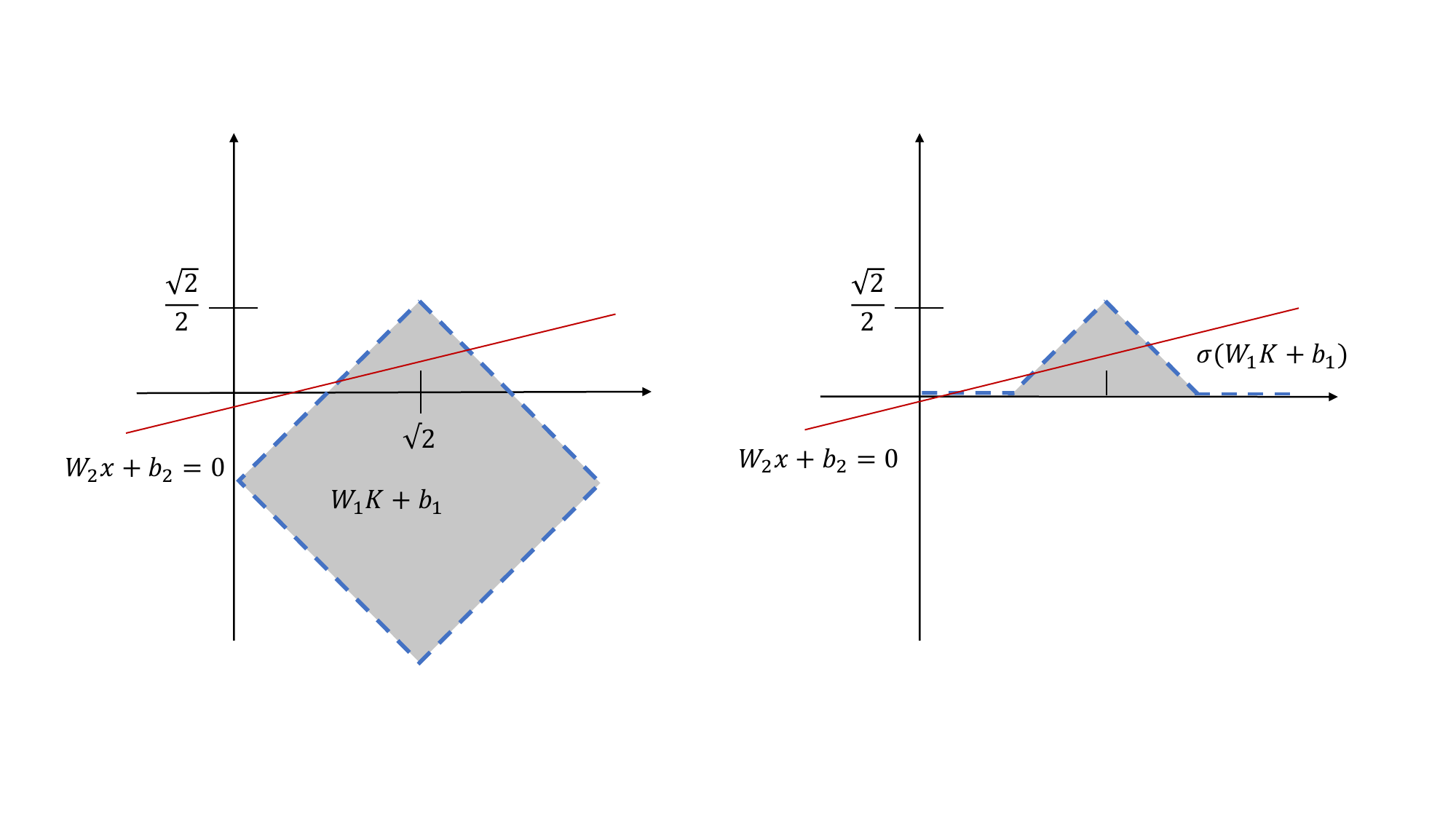}
	\caption{Example \ref{exampNotPathCon}: drawing of  $W_1K+b_1$ (left) and $\sigma(W_1K+b_1)$ right ($x_1$ horizontal, $x_2$ vertical). The red line depicts the decision hypersurface defined by $W_2\,x+b_2=0$} 
	\label{exampel1_mapping}
\end{figure}

It is straightforward to adapt the above example to the leaky ReLU activation $\sigma_\beta(t)=\max\{t,\beta \, t\},\ 0<\beta<1$. The important point is that a convex excerpt of the activation function can be used to create an internal corner in the image of $K$ under the mapping that corresponds to the first layer, c.f. Figure \ref{exampel1_mapping} on the right.

\begin{figure}
	\centering
	\includegraphics[scale=0.3]{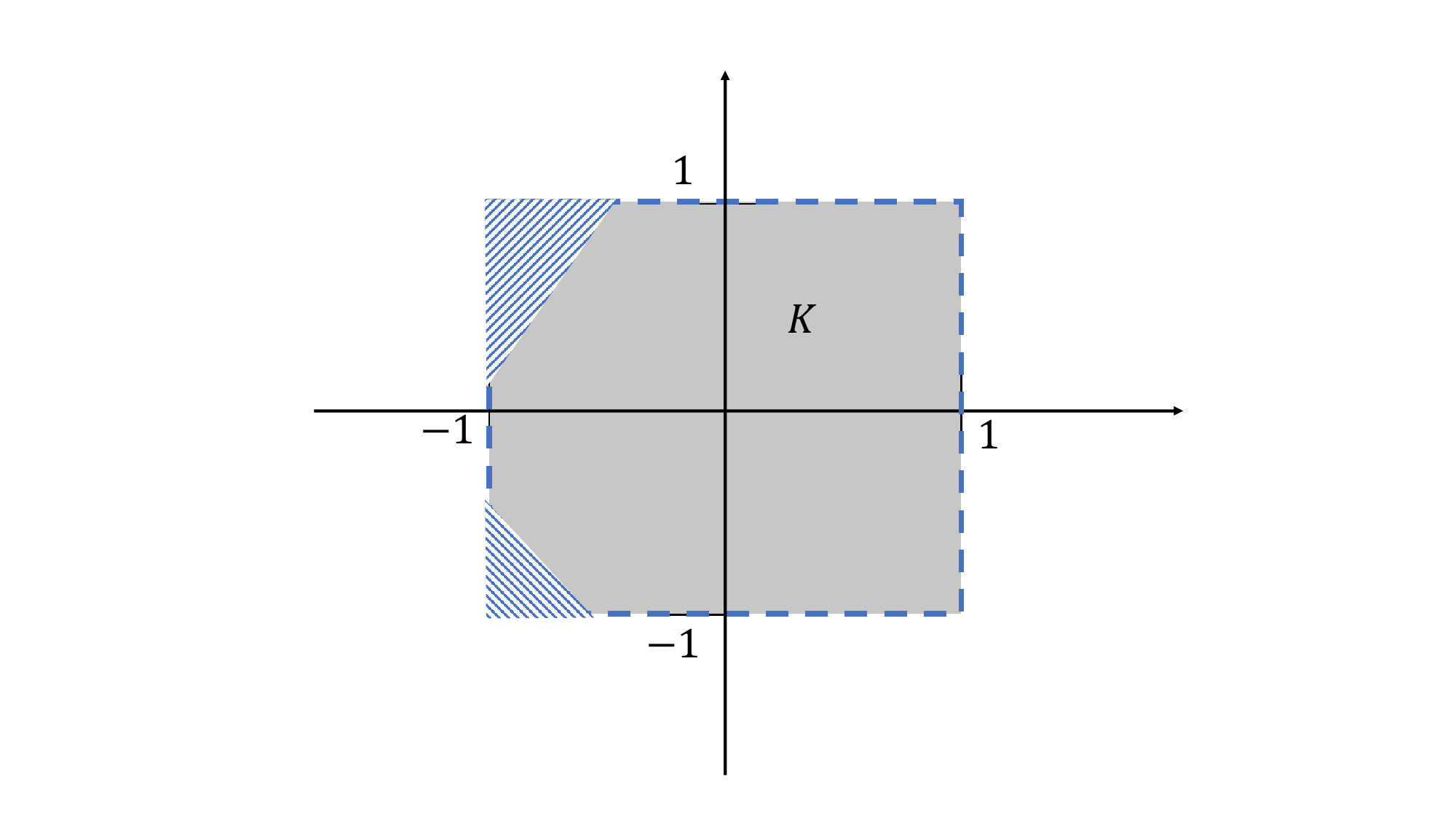} 
	\caption{Drawing of the set $K$ in Example \ref{exampNotPathCon} with crosshatched blue area for the preimage of $(-\infty,0)$ under $F$ ($x_1$ horizontal, $x_2$ vertical)}
	\label{exampel1_preIm}
\end{figure}

We further extend Example \ref{exampNotPathCon} to the case that the mapping starts at $\R^2$ rather than $K$. The example shows that the surjectivity condition Theorem \ref{theHeinConnected} cannot be dropped in general and the result does not hold for ReLU. It should be mentioned that the authors of  \cite{nguyen2018neural} are aware of this limitation as they also formulate a counter example for ReLU networks.
\begin{figure}
	\centering
	\includegraphics[scale=0.3]{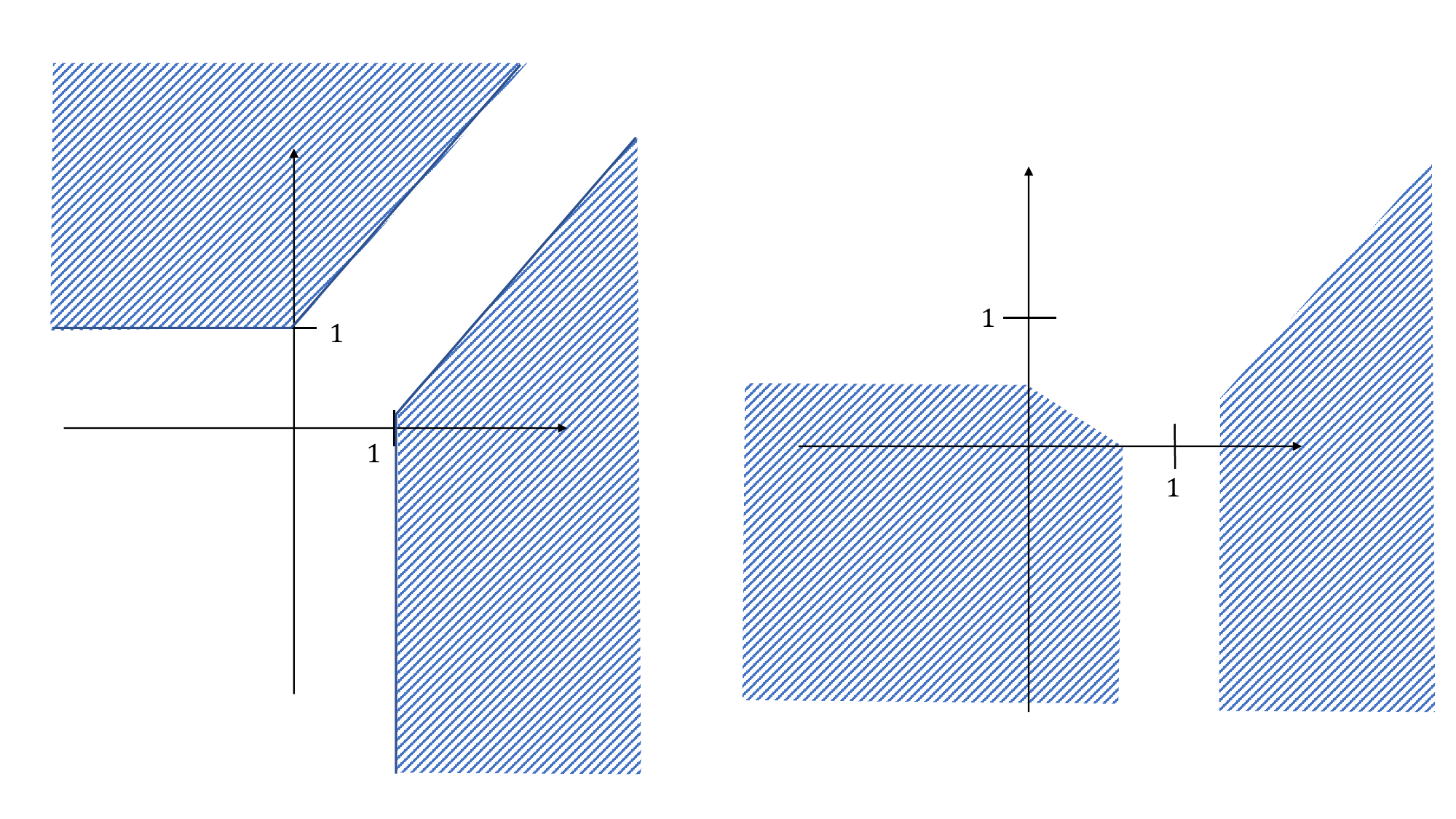} 
	\caption{Example \ref{exampNotCon} ($x_1$ horizontal, $x_2$ vertical): Left, crosshatched blue area depicts the preimage of $[0,1/\sqrt{2}]\times \{0\}\cup[3/\sqrt{2},\infty)\times \{0\}$ under $G_2$. Right: crosshatched blue area depicts the preimage of $(-\infty,0)$ under $F$}
	\label{exampel2_preIm}
\end{figure}
\begin{example}\label{exampNotCon}
	Let $\sigma$ be the ReLU activation function and $W_1=I_2$ the identity in $\R^2$ and $b_1=(0,0)^T$. Then $A_1:\R^2\rightarrow \R^2$, $x\mapsto \sigma(W_1x+b_1)$ maps $\R^2$ to $Q_1:=\{(x_1,x_2)^T\in\R^2: x_1,x_2\geq 0\}$.
	Let $W_2$ be the rotation matrix for angle $\alpha=-3/4 \pi$ (c.f. (\ref{rotMat})), and $b_2=\sqrt{2}\,(1,1/2)^T$. The resulting image of $\R^2$ under $G_2:\R^2 \rightarrow \R^2$,  $x\mapsto \sigma(W_2\,A_1(x)+b_2) $ is similar to Figure \ref{exampel1_mapping} right, except that the right bar is continued to $+\infty$. One verifies that the preimage of these bars is as depicted in Figure \ref{exampel2_preIm} left. Following Example \ref{exampNotPathCon}, we set  $W_3= 1/\sqrt{2}\,(1,-4)^T$, $b_3=1/4$ and define $F:\R^2\rightarrow \R$ as $x\mapsto W_3 G_2(x)+b_3$. Now the preimage of $(-\infty,0)$ under $F$ is not connected in $\R^2$ as sketched in Figure \ref{exampel2_preIm} right.	
\end{example}
The conclusion from the previous examples can be summarized as follows.
\begin{corollary}
From Examples \ref{exampNotPathCon} and \ref{exampNotCon} it immediately follows that in general
\begin{enumerate}
	\item for a network function with $ReLU$ or leaky $ReLU$ activation function and width not exceeding the input dimension, the decision areas are not necessarily connected with respect to a bounded input domain (c.f. Definition \ref{innerConnect}). 
	\item In Theorem \ref{theHeinConnected} the condition that the activation function maps $\R$ surjectively to $\R$ can in general not be dropped. In particular, Theorem \ref{theHeinConnected} doesn't hold for  $ReLU$ activation.
\end{enumerate}
\end{corollary}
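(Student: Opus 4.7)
The plan is to read both conclusions directly off the two explicit constructions already given in Examples \ref{exampNotPathCon} and \ref{exampNotCon}; no new argument beyond verifying the figures is needed, so the proof reduces to three short confirmations.

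For claim (1) with plain ReLU I would simply invoke Example \ref{exampNotPathCon}: the network constructed there has $d_{in}=2$ and a single hidden layer of width $2$, so $\omega_F\le d_{in}$, and Figure \ref{exampel1_preIm} exhibits two disjoint open subsets of $K=[-1,1]^2$ whose union is the decision region $\{F<0\}\cap K$. To turn the picture into a rigorous argument I would write $A_1$ as a piecewise affine map on the (at most four) sign-cells cut out of $K$ by the first ReLU, observe that on each cell $A_1$ is a rotation composed with a coordinate projection, and check that the two components of $\{F<0\}\cap K$ lie in different sign-cells and are separated by the internal corner in $A_1(K)$ created at the origin, which prevents them from being joined by any path staying inside $K$. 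For the leaky ReLU case I would appeal to the remark immediately after Example \ref{exampNotPathCon}: the construction uses only the convex bend of $\sigma$ at the origin, which is equally present in each $\sigma_\beta$ with $0<\beta<1$, and the analogous explicit computation (with adjusted angles of the pieces of $A_1(K)$) produces the same internal corner and hence the same disconnected preimage of $(-\infty,0)$.

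For claim (2) I would invoke Example \ref{exampNotCon}. There $d_{in}=2$, every layer has width $2$, and every weight matrix is square with full rank ($W_1=I_2$ is the identity, $W_2$ is a rotation, and $W_3$ is a non-zero row), so the example satisfies the rank and width conditions of Theorem \ref{theHeinConnected}. Figure \ref{exampel2_preIm} (right) displays two disjoint open subsets of $\R^2$ whose union is $\{F<0\}$. Since ReLU is continuous and strictly monotonically increasing on $\R$ but fails to satisfy $\sigma(\R)=\R$, the example is a direct counter-instance to the conclusion of Theorem \ref{theHeinConnected} when the surjectivity hypothesis is dropped, establishing the second assertion of the corollary.

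The main obstacle I anticipate is making the figure-based claims fully rigorous in a compact way. The cleanest strategy is to partition the input space (either $K$ or $\R^2$) into the sign-cells of the first ReLU, write each subsequent composition $A_l\circ\cdots\circ A_1$ as an explicit affine map on every cell, and then solve the final linear inequality $W_Ly+b_L<0$ on each image; pulling the solutions back cell by cell produces the disconnected decomposition depicted in the figures. Once this bookkeeping has been carried out for the two examples, the corollary follows immediately.
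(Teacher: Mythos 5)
Your proposal is correct and takes essentially the same route as the paper, which offers no argument beyond citing Examples \ref{exampNotPathCon} and \ref{exampNotCon} as immediate witnesses for the two claims; your extra cell-by-cell verification of the figures is a reasonable way to make that rigorous. One small slip: ReLU is only non-decreasing, not strictly monotonically increasing, so Example \ref{exampNotCon} actually violates two hypotheses of Theorem \ref{theHeinConnected} at once rather than isolating surjectivity --- this does not affect the conclusion that the theorem fails for ReLU, and the paper frames it the same way.
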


\section{Experiments}

\begin{figure}
	\centering
	\includegraphics[width=0.460\textwidth]{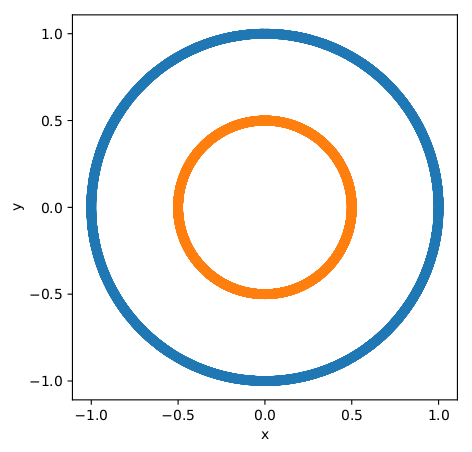}
	\hfill
	\includegraphics[width=0.490\textwidth]{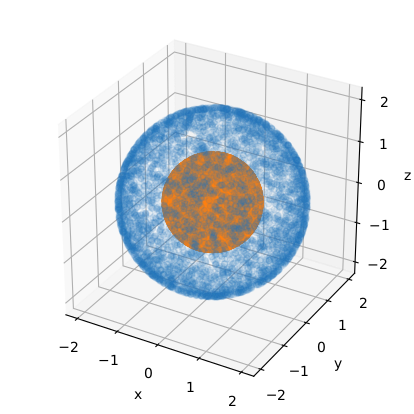}
	\caption{Input datasets in the two and three dimensional case. The dataset consists of two classes: the inner sphere (orange) labeled as $0$ and the outer sphere (blue) labeled as $1$. Left: 1-sphere with radius $0.5$ for the inner sphere and radius $1.0$ for the outer sphere. Right: 2-sphere with radius $1.0$ for the inner sphere and radius $2.0$ for the outer sphere.}
	\label{sphere_dataset}
\end{figure}
\begin{figure}
	\centering
	\begin{subfigure}{0.19\textwidth}
         \centering
         \includegraphics[width=\textwidth]{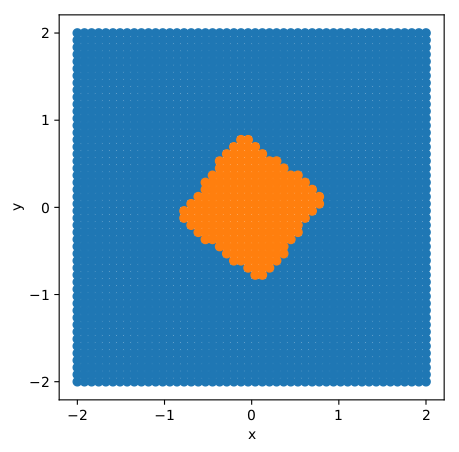}
         \caption{Success - Epoch 10}
    \end{subfigure}
	\hfill
	\begin{subfigure}{0.19\textwidth}
         \centering
         \includegraphics[width=\textwidth]{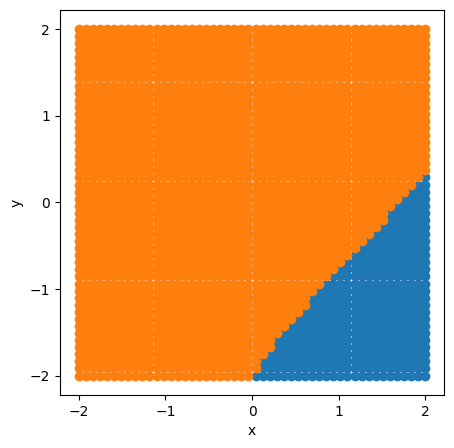}
         \caption{Failure - Epoch 0}
    \end{subfigure}
    \hfill
	\begin{subfigure}{0.19\textwidth}
         \centering
         \includegraphics[width=\textwidth]{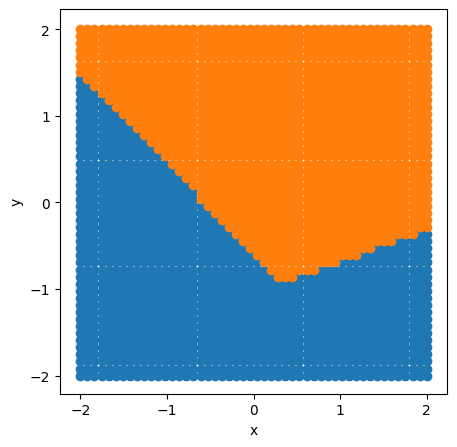}
         \caption{Failure - Epoch 1}
    \end{subfigure}
    \hfill
	\begin{subfigure}{0.19\textwidth}
         \centering
         \includegraphics[width=\textwidth]{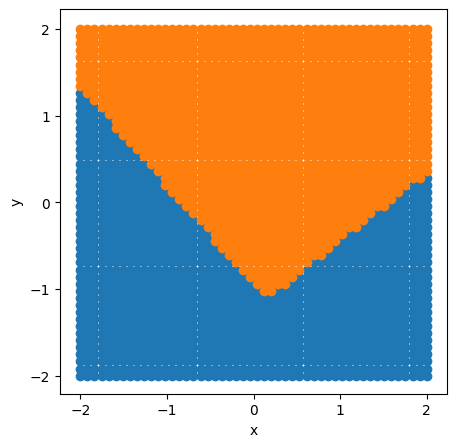}
         \caption{Failure - Epoch 2}
    \end{subfigure}
    \hfill
	\begin{subfigure}{0.19\textwidth}
         \centering
         \includegraphics[width=\textwidth]{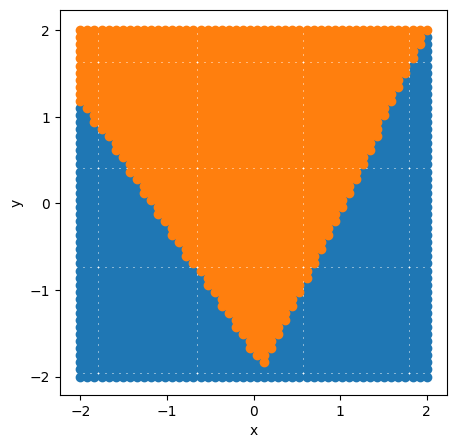}
         \caption{Failure - Epoch 10}
    \end{subfigure}
	\caption{Decision regions by trained networks for input dimension two. (a): Decision regions for a network with one hidden layer of size $d_{in}+1$ obtaining $100$\% test accuracy. (b)-(e): Decision regions for a network with one hidden layer of size $d_{in}$ over several epochs. The results show that the decision regions need to be unbounded throughout training even before the first epoch, i.e. at initialization.}
	\label{sphere_decision_dim2}
\end{figure}
\begin{figure}
	\centering
	\begin{subfigure}{0.205\textwidth}
         \centering
         \includegraphics[width=\textwidth]{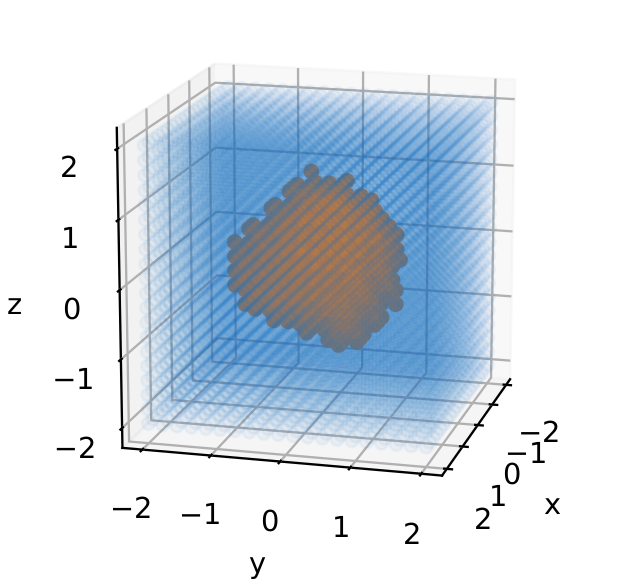}
         \caption{Success - Epoch 10}
    \end{subfigure}
	\hfill
	\begin{subfigure}{0.19\textwidth}
         \centering
         \includegraphics[width=\textwidth]{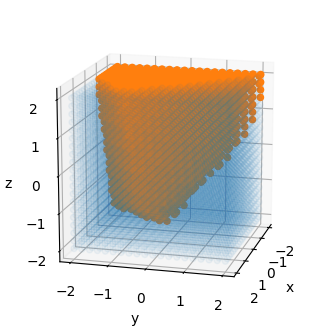}
         \caption{Failure - Epoch 0}
    \end{subfigure}
    \hfill
	\begin{subfigure}{0.19\textwidth}
         \centering
         \includegraphics[width=\textwidth]{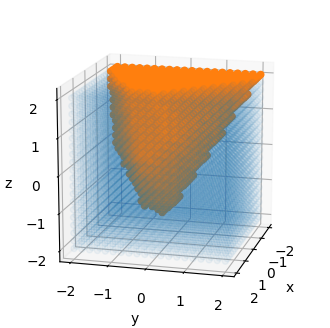}
         \caption{Failure - Epoch 1}
    \end{subfigure}
    \hfill
	\begin{subfigure}{0.19\textwidth}
         \centering
         \includegraphics[width=\textwidth]{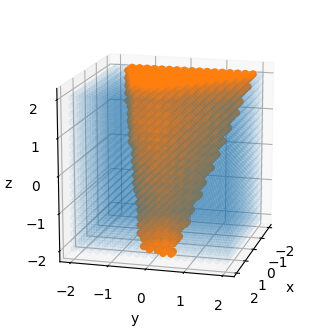}
         \caption{Failure - Epoch 2}
    \end{subfigure}
    \hfill
	\begin{subfigure}{0.19\textwidth}
         \centering
         \includegraphics[width=\textwidth]{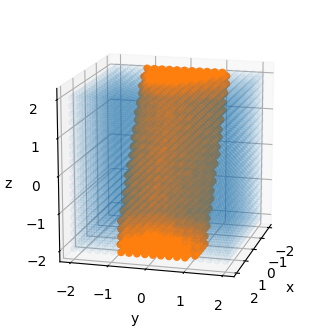}
         \caption{Failure - Epoch 10}
    \end{subfigure}
	\caption{Decision regions by trained networks for input dimension three. (a): Decision regions for a network with one hidden layer of size $d_{in}+1$ obtaining $100$\% test accuracy. (b)-(e): Decision region for a network with one hidden layer of size $d_{in}$ over several epochs. The results show that the decision regions need to be unbounded throughout the training even before the first epoch, i.e. at initialization.}
	\label{sphere_decision_dim3}
\end{figure}
\begin{figure}
	\centering
	\includegraphics[width=0.475\textwidth]{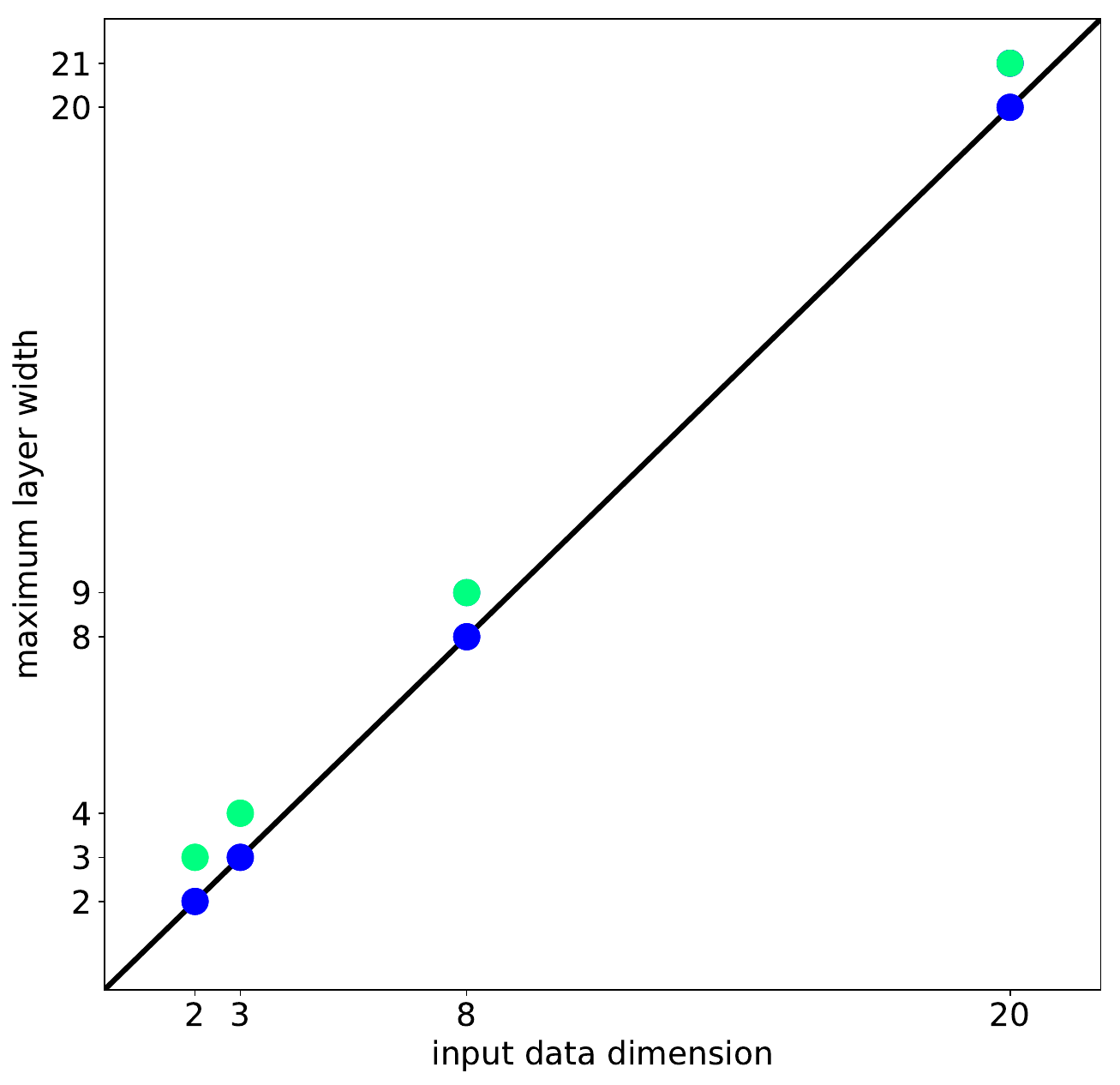}
	\hfill
	\includegraphics[width=0.475\textwidth]{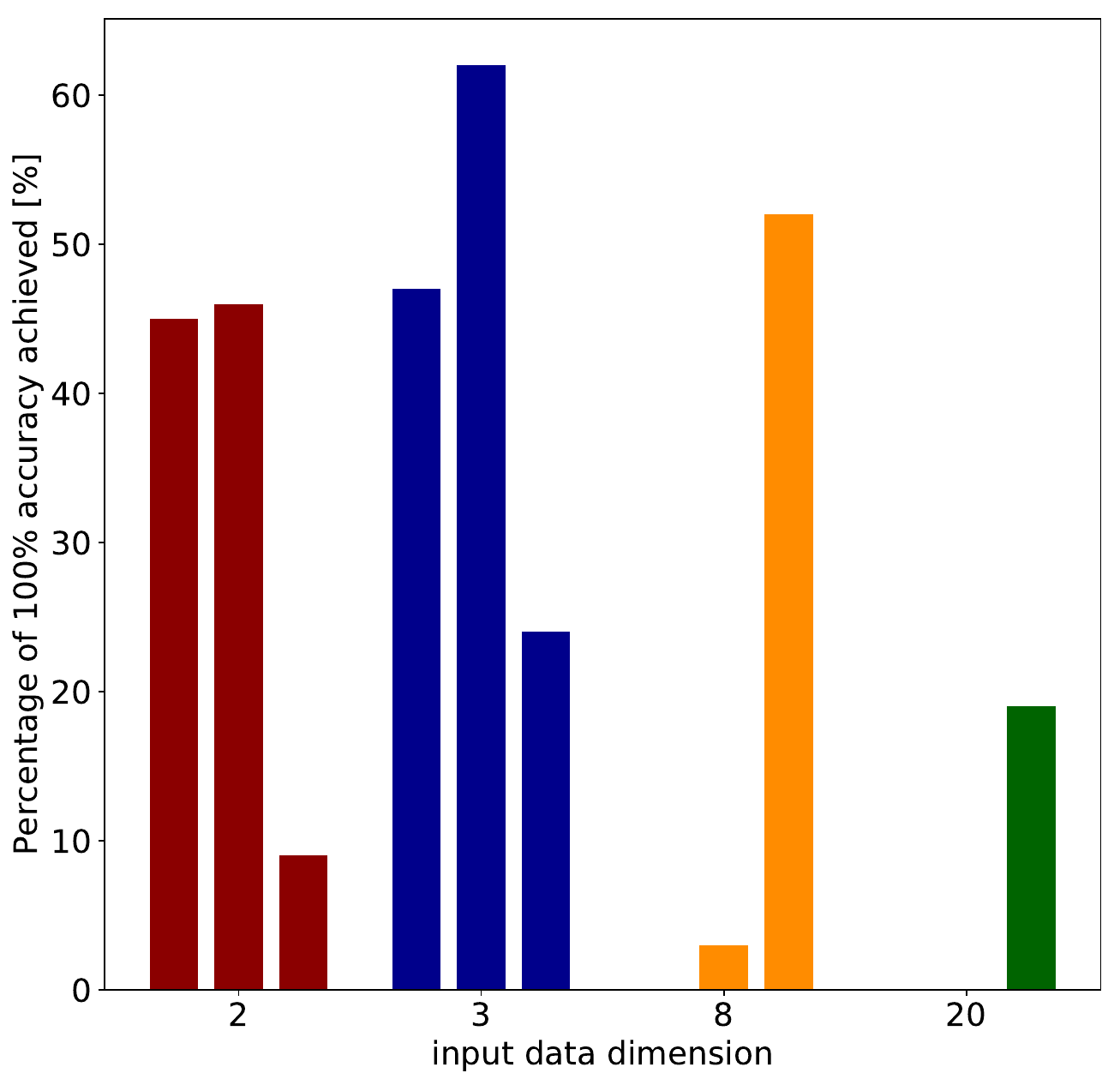}
	\caption{Results of the complete test run (24 combination of settings trained $100$ times for $10$ epochs). Left: Input data dimension $d_{in}$ plotted against maximal network width $\omega_F$. Green points are instances for which the network obtained at least once $100$\% accuracy on the test set. Blue points are instances for which the network never obtained $100$\% test accuracy. We clearly see the trend that the network is only capable of learning correct decision regions when $\omega_F>d_{in}$. Right: For each input data dimension (2, 3, 8 and 20), we tested three different network sizes (colored in the same color from left to right: 1,2 and 10). The height of the bar plot represents the percentage of networks which obtained $100$\% accuracy on the test set out of 100 repeated trainings. For $d_{in}=8$ and $d_{in}=20$ some of the network settings did never achieve $100$\% although success should be possible. That is why some bars vanish completely. We only plot networks for which $\omega_F>d_{in}$, since otherwise the percentage is zero.}
	\label{sphere_stats}
\end{figure}
In this section, we present the results of some numerical experiments. In the first instance, those experiments where carried out for illustrations purposes. By means of simple artificial data, we show that the limitations that we derived in our main result can be observed in low dimensional setting. 
Secondly, we briefly investigated if our theoretically derived limitation affect the capabilities of neural networks to approximate common (simple) training data sets. 

 We defined a dataset consisting of two n-spheres both being centered at the origin. The inner n-sphere (first class) has a radius of $\frac{n}{2}$ and the outer n-sphere (second class) a radius of $n$. We increase the radius of the n-sphere with the dimension in order to avoid numerical problems when we go to higher dimensions. The two and three dimensional dataset is shown in Fig. \ref{sphere_dataset}. The neural network we wanted to train should separate the inner sphere from the outer sphere and we considered a trained network to be successsful only when we reached a test accuracy of $100$\%. According to Theorem \ref{theoDecRIntersect}, this should only be possible if $\omega_F > d_{in}$. If $\omega_F \leq d_{in}$, we should always get an unbounded decision region and consequently not obtain a test accuracy of $100$\%. We generated $10^7$ uniformly distributed points on each sphere for the training data and $25\cdot 10^5$ uniformly distributed points on each sphere for the test data. Each network was trained for $10$ epochs using a batch size of $10^4$ and the Adam optimizer with a learning rate of $0.001$. For the cost function we used the cross entropy. We trained neural networks of different sizes with fully connected layers and ReLU activation functions only. We combined the following different possible settings:
\begin{enumerate}
	\item Input dimension: 2, 3, 8 and 20,
	\item Number of fully connected layers: 1, 2, 10,
	\item All layers of width $d_{in}$ and layers of width $d_{in}+1$.
\end{enumerate}
Each one of the $24$ possible combinations of settings was repeated for $100$ trainings of $10$ epochs in order to have a meaningful representation of the setting. For each training, we stored the maximal obtained accuracy on the test dataset as the performance of the training. For input data of dimension two and three, Fig. \ref{sphere_decision_dim2}, and Fig. \ref{sphere_decision_dim3}, respectively, illustrate the successful training of a network with $\omega_F > d_{in}$ and an example of an unsuccessful training of a network with $\omega_F \leq d_{in}$. As expected, the networks with $\omega_F \leq d_{in}$ learn an unbounded decision region. Moreover, the results show that unbounded decision regions cannot be avoided throughout the training, even at the network weight initialization. This highlights that the results obtained in Theorem \ref{theoDecRIntersect} are algorithm-independent. The summary of the complete parameter study is illustrated in Fig. \ref{sphere_stats}. Successful training was only possible when $\omega_F > d_{in}$, although some of the settings which could have been successful did not achieve $100$\% test accuracy.

\begin{figure}
	\centering
	\includegraphics[width=0.93\textwidth]{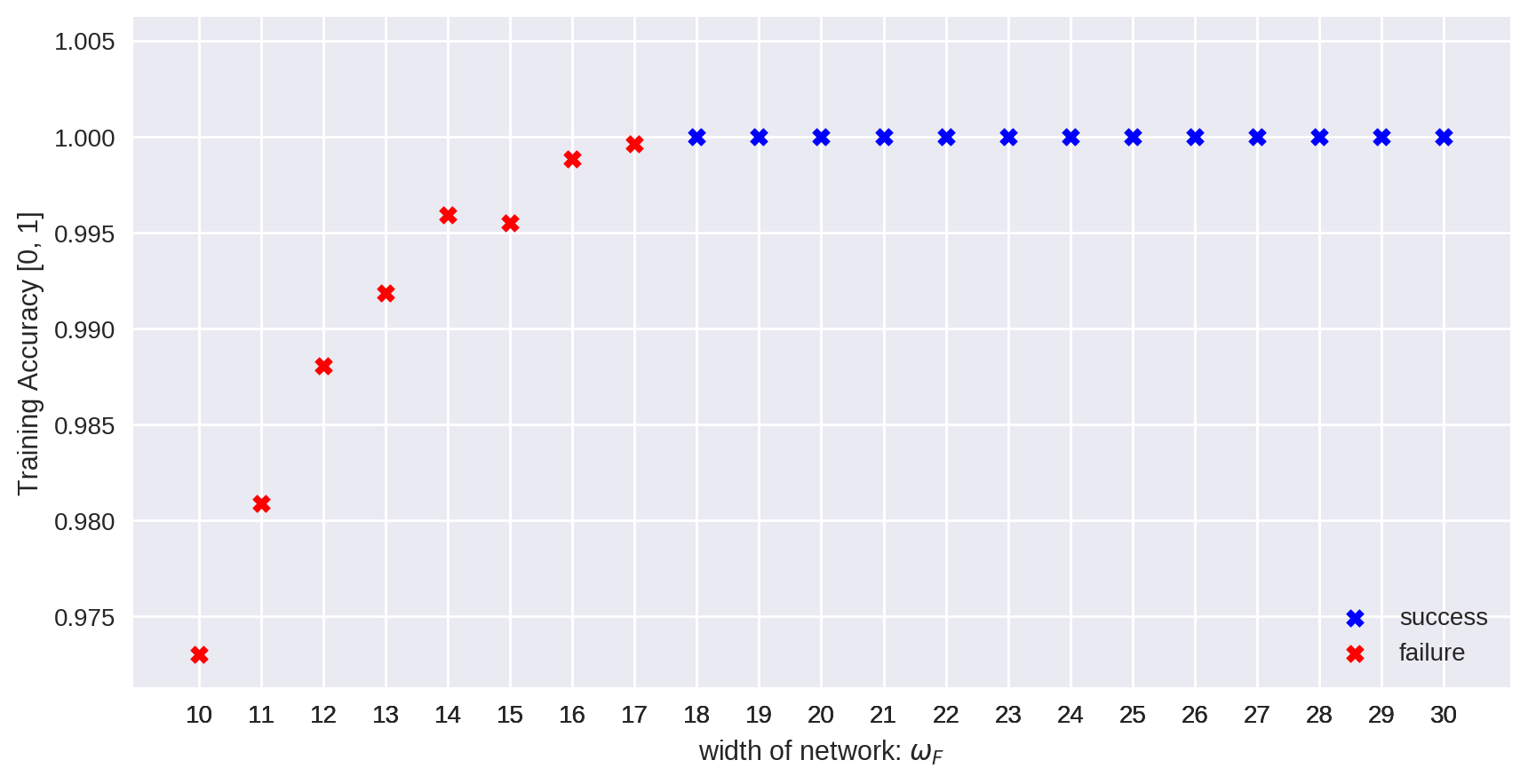} 
	\caption{Results of our investigations on achieving 100\% accuracy on the training set for MNIST  as a function of the width $\omega_F$ of the neural networks: In blue we mark the widths for which our experiments yielded 100\% accuracy whereas in red we mark the widths for which we could not train a model to classify all training samples correctly. The dimension of the input data is $28*28=784.$}
	\label{mnist}
\end{figure}

\begin{figure}
	\centering
	\includegraphics[width=0.93\textwidth]{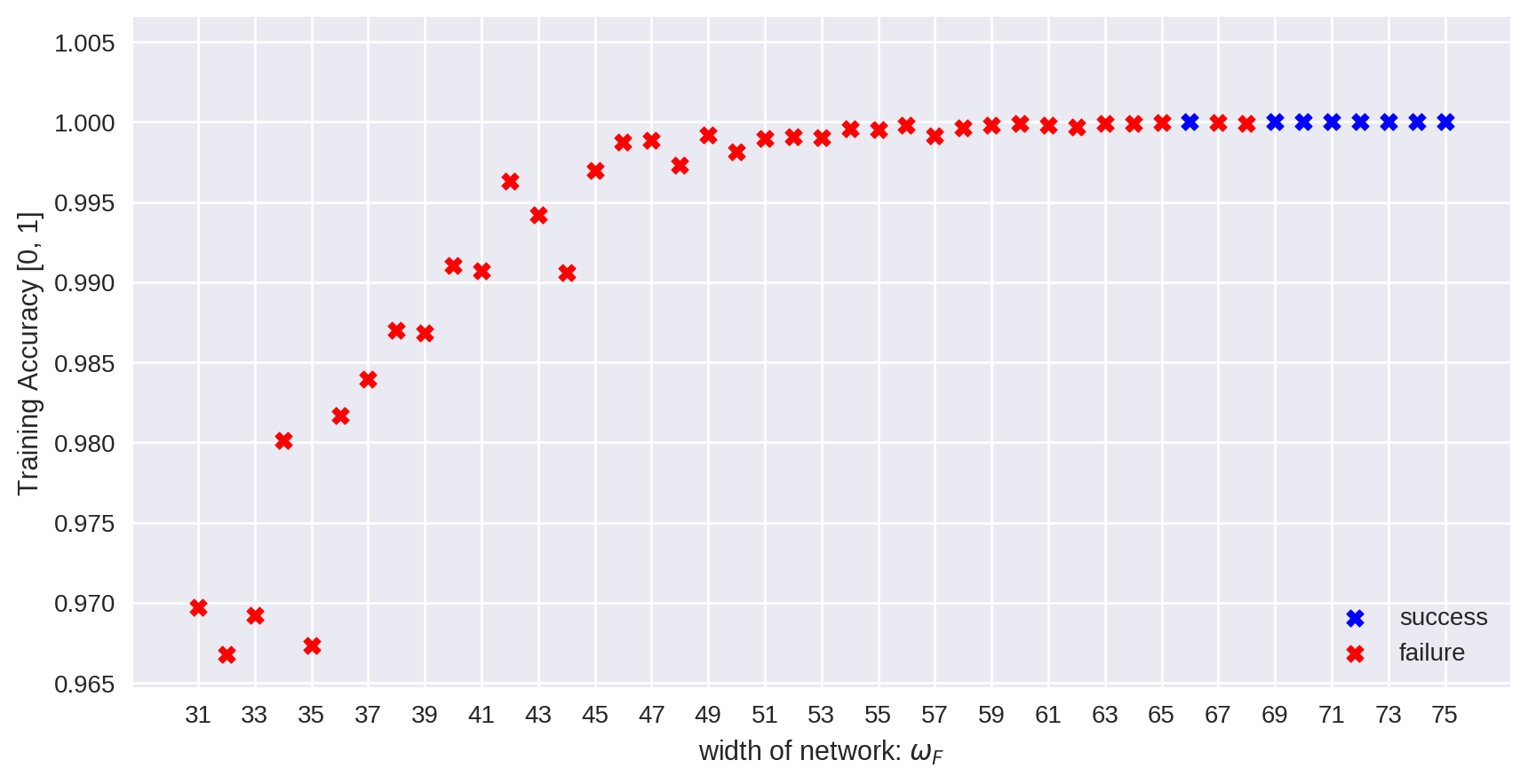} 
	\caption{Results of our investigations on achieving 100\% accuracy on the training set for Fashion MNIST  as a function of the width $\omega_F$ of the neural networks: In blue we mark the widths for which our experiments yielded 100\% accuracy whereas in red we mark the widths for which we could not train a model to classify all training samples correctly. The dimension of the input data is $28*28=784.$}
	\label{fashion}
\end{figure}

\begin{figure}
	\centering
	\includegraphics[width=0.93\textwidth]{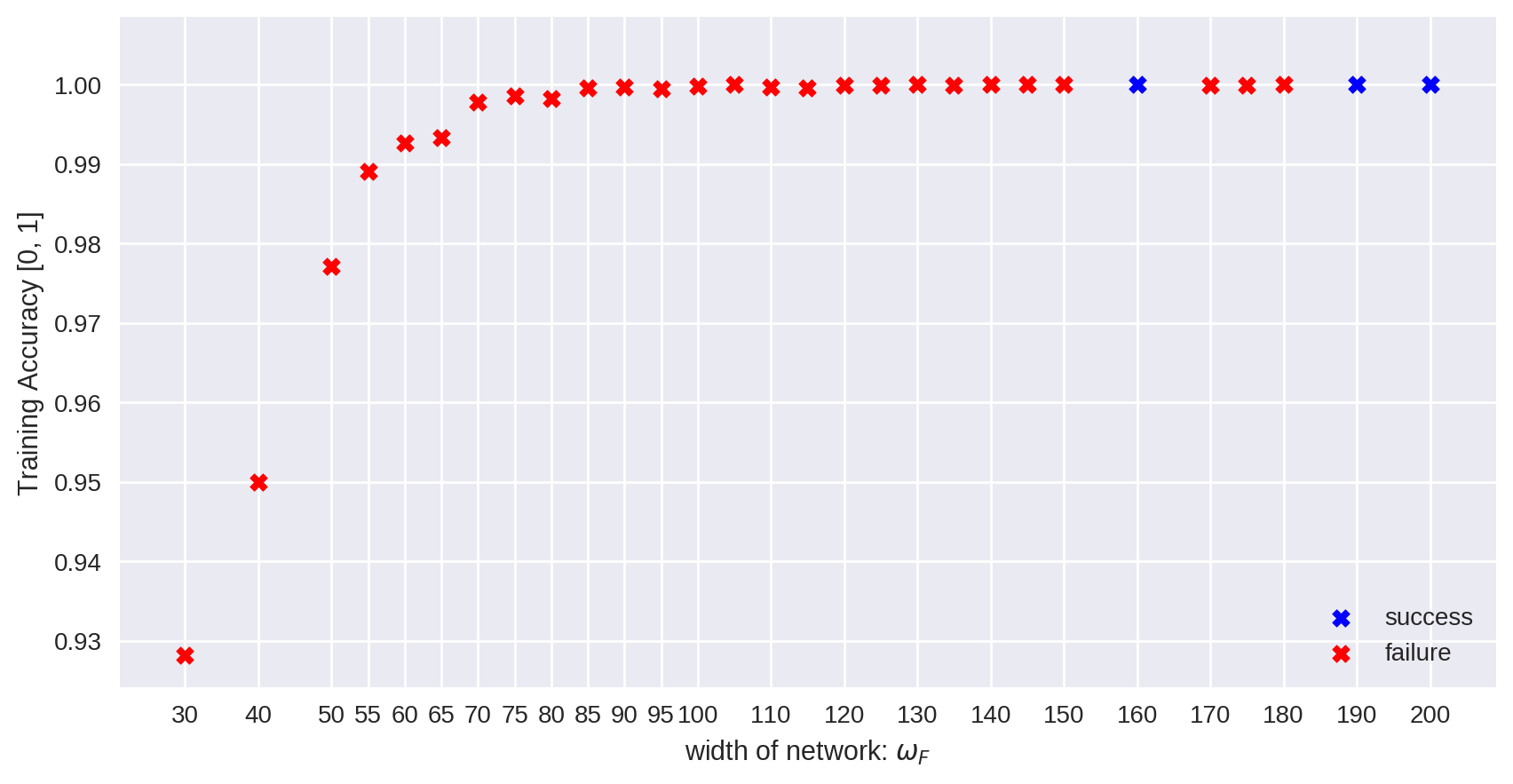} 
	\caption{Results of our investigations on achieving 100\% accuracy on the training set for EMNIST  as a function of the width $\omega_F$ of the neural networks: In blue we mark the widths for which our experiments yielded 100\% accuracy whereas in red we mark the widths for which we could not train a model to classify all training samples correctly. The dimension of the input data is $28*28=784.$}
	\label{emnist}
\end{figure}

In our second experiment we wanted to test the applicability of our results on widely used benchmark datasets. The goal of this experiment was to achieve 100\% accuracy on the training datasets for a varying width $\omega_F$ of the neural network. When the expressiveness of the neural network is high enough, then we should be able to classify all samples on the training distribution correctly. On the other side, once the width $\omega_F$ of the neural network is too small, we should no longer be able to classify all samples correctly. We did not consider the test dataset in this setting, because our work is about approximation capabilities and hence does not consider generalization. This is not a problem for the first experiment, as in the former the manifold of the data is well understood and train and test samples behave similarly. Our investigations were performed as follows: We trained many different neural networks until we reached 100\% accuracy on the training dataset or we stopped the training after 500 epochs. We tested different optimizers (Adam, SGD with momentum and Nesterov, and RMSProp), different learning rates, different batch sizes and we alternatively included a scheduler which decreased the learning rate once a plateau was reached. We did not use weight decay or any other regularization techniques. We used neural networks consisting of fully connected layers only with 2 up to 5 hidden layers. The width of all layers was chosen to be the same. However, we did not consider neural networks with a width smaller than the number of classes of the dataset used. We tested neural networks mostly using ReLU as an activation function, but did also experiments with tanh, however, both behaved similarly. We tested our approach on MNIST and Fashion MNIST (10 classes) as well as on the EMNIST letters (26 classes) using the cross-entropy as the cost function. From all our experiments, we report in Fig. \ref{mnist} (MNIST), Fig. \ref{fashion} (Fashion MNIST) and Fig. \ref{emnist} (EMNIST letters) in blue (success) all the widths for which we achieved 100\% accuracy on the training distribution and mark by red (failure) all the widths for which this was not achieved in our experiments. Achieving 100\% accuracy consistently was more challenging then we anticipated and for some widths we did not achieve 100\% accuracy although it was achieved for smaller widths. We report only the transition phase between successful and failed widths as the 100\% training accuracy was already achievable for widths much smaller than the input dimension of the dataset. However, our results show that the trend in general is the same for all datasets: on the one side the expressiveness decreases drastically when a certain threshold is reached and on the other side, in contrast to our theoretically derived limitations, the input dimension limitation does not provide any restriction for widely used benchmark datasets. 

It is a common belief that the true dimension of a dataset does not correspond to the dimension of the input space (e.g. pixel space for images), but that the dataset can be represented by a lower dimensional (non-linear) manifold. The dimension of the latter is commonly called the intrinsic dimension, which is for most datasets unknown, but its determination for example is investigated in \cite{levina2005maximum}. Considering our results on the benchmark datasets, we would like to raise the question, and leave this as an open question for future work, whether the theoretical results as derived in the beginning of our work could show limitations that would become relevant for real data in their lower dimensional representation corresponding to their intrinsic dimension.

\section{Conclusion}

For a wide class of activation functions, we have shown that for neural network functions that have a maximum width less than or equal to the input dimension the connected components of the decision regions are unbounded. Hence, for such networks the decision regions intersect the boundary of a natural input domain. This links some recent results from \cite{hanin2017approximating} and \cite{nguyen2018neural}, where for such narrow neural networks limitations regarding their expressive power for the case of ReLU activation are achieved in the first work, and the connectivity of decision regions is restricted for the case of continuous, monotonically increasing and surjective $\sigma:\R\rightarrow \R$ in the second work. We illustrated our findings by numerical experiments with spherical data where it was observed that the input dimension is the critical threshold for network width in order to achieve $100$\% accuracy. However, in experiments on MNIST, Fashion MNIST and EMNIST letters we could not detect a limitation on the performance for such narrow networks. This raises the question to what extent limitations in terms of connectivity imply a crucial restriction in practical applications. From theoretical perspective, it would be interesting to know if Theorem \ref{theoDecRIntersect} still holds for other types of activation function, like non-continuous and oscillating activations and whether the restriction of the input dimension can be relaxed to the dimension of the underlying data manifold.

\section*{Acknowledgment}
This work was partially supported by the MECO project ”Artificial Intelligence for Safety Critical Complex Systems”. The second author is supported by the Luxembourg National Research Fund (FNR) under the grant number 13043281.

\newpage

\bibliographystyle{alpha}
\bibliography{bib/bib_nn_02}

\end{document}